\def\eqref#1{equation~\ref{#1}}
\def\1{\bm{1}}
\DeclareMathAlphabet{\mathsfit}{\encodingdefault}{\sfdefault}{m}{sl}
\SetMathAlphabet{\mathsfit}{bold}{\encodingdefault}{\sfdefault}{bx}{n}
\newtheorem{proposition}{Proposition}
\newtheorem{definition}{Definition}
\newtheorem{lemma}{Lemma}
\newcolumntype{Y}{>{\raggedright\arraybackslash}X} 
\newcolumntype{M}{>{$}l<{$}} 
\newcommand{\revision}[1]{{#1}}
  \def\Cref#1{Ref. #1}%
\newcommand\blfootnote[1]{%
  \begingroup
  \renewcommand\thefootnote{}\footnote{#1}%
  \addtocounter{footnote}{-1}%
  \endgroup
}
\title{A Unifying Framework for Parallelizing Sequential Models with Linear Dynamical Systems}
\author{
  \name \!\!Xavier Gonzalez* \email xavier18@stanford.edu \\
  \addr Statistics Department and Wu Tsai Neurosciences Institute\\
  Stanford University
 \AND 
 \name E. Kelly Buchanan* \email kelly.buchanan@stanford.edu \\
  \addr Statistics Department and Wu Tsai Neurosciences Institute\\
  Stanford University
 \AND 
 \name Hyun Dong Lee \email hdlee@stanford.edu \\
  \addr Computer Science Department and Wu Tsai Neurosciences Institute\\
  Stanford University
 \AND 
 \name Jerry Weihong Liu \email jwl50@stanford.edu \\
  \addr Institute for Computational and Mathematical Engineering\\
  Stanford University
 \AND
 \name Ke Alexander Wang \email alxwang@cs.stanford.edu \\
  \addr Computer Science Department\\
  Stanford University
 \AND 
 \name David M. Zoltowski \email dzoltowski@stanford.edu \\
  \addr Statistics Department and Wu Tsai Neurosciences Institute\\
  Stanford University 
 \AND 
 \name Leo Kozachkov \email leokoz8@brown.edu \\
  \addr School of Engineering and Carney Institute for Brain Science \\
  Brown University 
 \AND 
 \name Christopher R\'{e} \email chrismre@stanford.edu \\
  \addr Computer Science Department\\
  Stanford University 
  \AND 
  \name Scott W. Linderman \email scott.linderman@stanford.edu \\
  \addr Statistics Department and Wu Tsai Neurosciences Institute\\
  Stanford University }
\begin{document}
\usetikzlibrary{positioning,calc,arrows.meta}
\maketitle

\blfootnote{* indicates lead authors. Author contribution statement in \Cref{app:contributions}.}

\vspace{-3em}
\begin{abstract}
Harnessing parallelism in seemingly sequential models is a central challenge for modern machine learning.
Several approaches have been proposed for evaluating sequential processes in parallel using iterative fixed-point methods, like Newton, Picard, and Jacobi iterations.
In this work, we show that these methods can be understood within a common framework based on linear dynamical systems (LDSs), where different iteration schemes arise naturally as approximate linearizations of a nonlinear recursion. 
Moreover, we theoretically analyze the rates of convergence of these methods, and we verify the predictions of this theory with several case studies.
This unifying framework highlights shared principles behind these techniques and clarifies when particular fixed-point methods are most likely to be effective. By bridging diverse algorithms through the language of LDSs, the framework provides a clearer theoretical foundation for parallelizing sequential models and points toward new opportunities for efficient and scalable computation.
\end{abstract}

\section{Introduction}
Sequential processes are ubiquitous in machine learning models.
Evaluating a recurrent neural network~\citep{goodfellow2016deep}, sampling a diffusion model~\citep{sohl2015deep, ho2020denoising, song2021score}, generating from a deep state space model~\citep{gu2022s4, smith2023s5, orvieto-resurrecting, mamba}, and unrolling layers of a deep neural network~\citep{he2016deep,vaswani2017attention} all involve sequential computations.
Naively, these sequential computations require time proportional to the sequence length or the network depth and do not take full advantage of hardware accelerators like GPUs and TPUs \citep{deer2024}. However, for one important class of computations---\textit{linear} recursions or linear dynamical systems (LDSs)---this bottleneck has been overcome using techniques like the parallel scan\footnote{See \Cref{app:parallel scan} for an introduction to the parallel scan.}~\citep{blelloch1990prefix, cs149-lecture8-dataparallel-2024}.
Indeed, the parallelizability of linear recursions is key to scalably evaluating many deep state space models~\citep{smith2023s5, mamba}.
A natural question is whether other sequential processes in machine learning, such as those with \textit{nonlinear} recursions, can be similarly accelerated.

On first inspection, the parallel scan algorithm does not seem to generalize to nonlinear recursions. 
The parallel scan applies to linear recursions because the composition of two linear functions remains linear, whereas the composition of two nonlinear functions is generally more complicated. 
For example, the composition of two quadratic functions is quartic. 
Nevertheless, recent works have proposed several techniques to parallelize nonlinear recursions, including Jacobi \citep{song2021accelerating}, Picard~\citep{shih2023parallel, lu2025parasolver}, Newton~\citep{deeppcr, deer2024} and quasi-Newton~{\citep{tang2024accelerating, gonzalez2024scalable}} iterations. 
These techniques were originally applied to different machine learning problems, including parallelizing the training of nonlinear RNNs and sampling from diffusion models, and follow different notations and intuitions, which obscures their underlying similarities.

In this paper, we show that Jacobi, Picard, and Newton iterations all parallelize nonlinear recursions over the sequence length by iteratively linearizing the nonlinear recurrence and evaluating the resulting linear dynamical system in parallel. While the connections between Picard and Newton iterations and their convergence rates are well-known in applied mathematics~\citep{ortega2000iterative}, we make them explicit for nonlinear recursions and characterize when each method works and why.

Our contributions include the following:
\begin{itemize}
    \item A unifying framework casting all four methods (Newton, quasi-Newton, Picard, and Jacobi) as iterative LDS evaluations (\Cref{sec:lds_parr});    
    \item A theoretical analysis showing that the rate of convergence depends on the stability of the resulting LDS and the fidelity of its approximation to the true linearized dynamics (\Cref{sec:conv_rates}); and 
   \item Three case studies---the group word problem, evaluating a nonlinear RNN, and sampling from a discretized Langevin diffusion---validating that this analysis predicts which method suits which problem (\Cref{sec:tasks}).
\end{itemize}
These contributions unify and clarify several recently proposed methods in the machine learning literature, and they highlight the centrality of linear dynamical systems for parallelizing seemingly sequential processes.

\section{Unifying fixed-point iterations using linear dynamical systems}\label{sec:lds_parr}

We first introduce notation for evaluating a generic sequence model.
Let $x_t \in \mathbb{R}^D$ denote the state at time $t$, and let $f_t$ denote the corresponding transition function at that time point.
Throughout this paper, we will use~$D$ to denote the dimension of the hidden state and~$T$ to denote the sequence length.
\paragraph{Problem Statement (Sequential Evaluation):} Evaluate the sequence $\mathbf{x}_{1:T}=(x_1, x_2, \ldots, x_T)$ starting from $x_0$ via the recursion,
\begin{equation}\label{eq:nonlin_recur}
    x_{t+1} = f_{t+1}(x_{t}).
\end{equation}
We omit input dependencies for simplicity, but note that an input $u_t$ can be incorporated into the definition of the transition function by letting $f_{t+1}(x_t) := f(x_t, u_t)$. 

The recurrence described in \cref{eq:nonlin_recur} cannot be evaluated in parallel in its original form because $x_{t+1}$ depends directly on $x_t$, creating a chain of dependencies. As a result, the computation of each state must wait for the previous state to be computed.
This approach takes $\mathcal{O}(T)$ time to evaluate the sequence.
Moreover, this inherently sequential approach prevents us from fully leveraging modern hardware accelerators, which can dramatically accelerate parallelizable computations. These nonlinear recursions are ubiquitous, appearing, for example, in the denoising pass of a diffusion model, in the forward pass of a nonlinear RNN, or in the recurrence relations in implicit layers and deep equilibrium models.

Fixed-point methods offer a promising alternative:
rather than computing the sequence step by step, make an initial guess for the \emph{entire} trajectory.
We denote this initial guess by $\mathbf{x}_{1:T}^{(0)}$.
We then iteratively refine this guess, operating over the entire sequence length in parallel, denoting the guess after $i$ fixed-point iterations as $\mathbf{x}_{1:T}^{(i)}$.
In particular, the current guess at iteration $i$ is further refined by applying a fixed-point operator $\mathcal{A}: \mathbb{R}^{TD} \mapsto \mathbb{R}^{TD}$ (which depends on the functions $f_t$ and the initial condition $x_0$) according to
\begin{equation}\label{eq:fixed_point_iter}
    \mathbf{x}_{1:T}^{(i+1)} = \mathcal{A}\left(\mathbf{x}_{1:T}^{(i)}\right).
\end{equation}
In order to be a fixed-point operator, $\mathcal{A}$ should have a fixed point $\mathbf{x}_{1:T}^\star$. 
That is, $\mathcal{A}$ should satisfy that $\mathbf{x}_{1:T}^\star = \mathcal{A}\left( \mathbf{x}_{1:T}^\star \right)$, where $\mathbf{x}_{1:T}^\star$ is the unique root of the system of equations given by
\begin{equation}\label{eq:high_d_root_finding}
     x_{t+1} - f_{t+1}(x_t) = 0, \quad \forall t \in \{0,\ldots,T-1\}.
\end{equation}
A stopping criterion is used to determine when the fixed point iterations in \cref{eq:fixed_point_iter} have converged up to some level of desired numerical accuracy.

Many fixed-point operators can be constructed to satisfy this constraint.
However, in the context of parallel evaluation of sequences, we can often be much more specific than a generic operator $\mathcal{A}$.
In fact, for the four fixed-point methods we consider in this work, the fixed-point operators $\mathcal{A}$ solve a linear time-varying system over the sequence length, with the common form,
\begin{align}\label{eq:common_form}
    x_{t+1}^{(i+1)} & = f_{t+1}(x_t^{(i)}) + \tilde{A}_{t+1} \, (x_t^{(i+1)} - x_t^{(i)})
\end{align}
where the transition matrix $\tilde{A}_{t+1} \in \mathbb{R}^{D \times D}$ is determined by the dynamics functions $f_{t+1}$ and the current guess for the state $x_t^{(i)}$.
Importantly, \cref{eq:common_form} is a linear dynamical system for the as yet unknown $\textbf{x}_{1:T}^{(i+1)}$ in terms of the currently known $\textbf{x}_{1:T}^{(i)}$, as $x_{t+1}^{(i+1)}$ is a linear function of $x_t^{(i+1)}$ plus a bias $b_{t+1} = f_{t+1}(x_t^{(i)}) - \tilde{A}_{t+1} x_t^{(i)}$, which only depends on the states from the previous fixed-point iteration.  

\begin{table}[t]
\caption{\textbf{Summary of fixed-point iteration schemes as linear dynamical systems}. We list the methods by the order of their approximation.
\revision{While higher order methods may converge in fewer iterations, each iteration may be more costly. For example, the prefix sum and parallel scan have $\mathcal{O}(\log T)$ depth, while a single Jacobi iteration has constant depth.} For all the methods, each iteration is an LDS, i.e. they can be written in the form of \cref{eq:common_form} where $\tilde{A}_{t+1}$ is the transition matrix.
These methods are guaranteed to converge in at most $T$ iterations~\citep[Proposition 1]{gonzalez2024scalable}. ``Order'' means the highest number of derivatives taken: Newton and quasi-Newton methods use first derivatives, while Picard and Jacobi methods do not use derivatives of $f_t$. 
}
\label{tab:fxd_pt_sum}
\centering
\renewcommand{\arraystretch}{1.8}
\begin{tabular}{@{} l c c c @ {}}
\toprule
\textbf{ Fixed-point method} & \textbf{Order} & \textbf{Transition matrix} $\boldsymbol{\tilde{A}_{t+1}}$ & \textbf{Parallelization} \\
\midrule
Newton & first-order & $\dfrac{\partial f_{t+1}}{\partial x_t} (x_t^{(i)})$ & \makecell{Parallel Scan \\ \small{(dense matrix multiplication)}} \\
Quasi-Newton & quasi first-order & $\operatorname{diag}\!\left[\dfrac{\partial f_{t+1}}{\partial x_t} (x_t^{(i)}) \right ]$ & \makecell{Parallel Scan \\ \small{(elementwise vector multiplication)}} \\
Picard & zeroth-order & $I_D$ & \makecell{Prefix Sum \\ \small{(vector addition)}} \\
Jacobi & zeroth-order & $0$ & \makecell{Map\\ \small{(embarrassingly parallel)}} \\
\bottomrule
\end{tabular}
\end{table}

\begin{wrapfigure}{r}{0.51\columnwidth} 
\vspace{-2.0em} 
  \centering
  \includegraphics[width=0.5\textwidth]{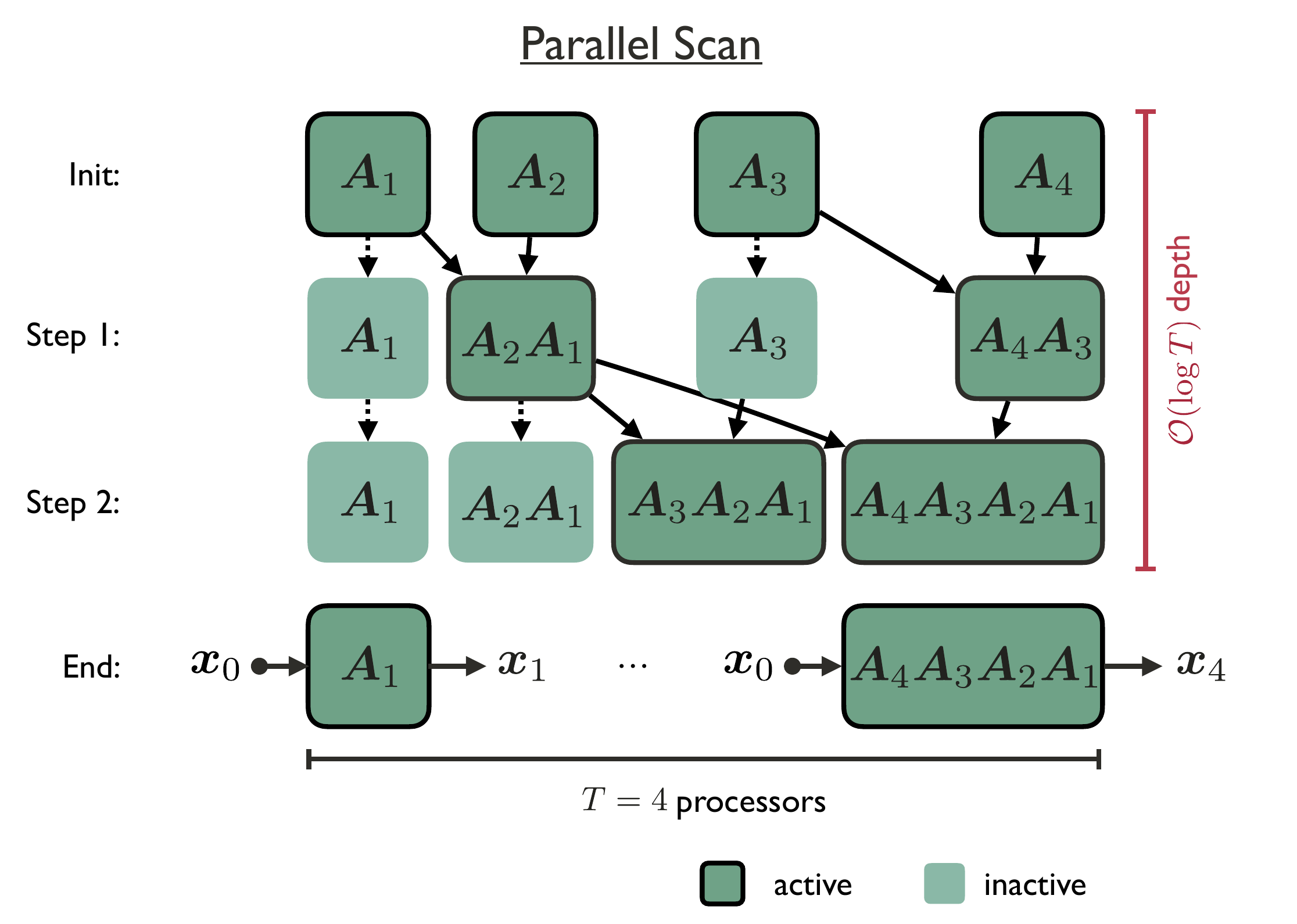}
  \caption{\textbf{Parallel scan for LDS.} We illustrate the parallel scan for a sequence of length $T=4$ for the simple LDS $x_t = A_t x_{t-1}$. 
  }
  \label{fig:pscan_linderman}
  \vspace{-1.0em} 
\end{wrapfigure}
The transition matrix $\tilde{A}_{t+1}$ can be thought of as an approximation to the Jacobian of the dynamics function~$\nicefrac{\partial f_{t+1}}{\partial x_t}$.
Different fixed-point methods simply use different approaches to linearizing the dynamics function, as shown in~\Cref{tab:fxd_pt_sum}. 

Importantly, because the recursion in~\cref{eq:common_form} is an LDS, it can be evaluated with a \emph{parallel scan}~\citep{blelloch1990prefix, Nguyen:2007:GP3}.
The parallel scan is a core primitive that allows the unrolling of $T$ steps from an LDS, $x_{t+1} = A_{t+1} x_t + b_{t+1}$, in $\mathcal{O}(\log T)$ time on a machine with $\mathcal{O}(T)$ processors. The logarithmic time in the sequence length comes from a divide-and-conquer approach made possible from the fact that composition of affine functions is \emph{closed}, i.e. another affine function. To show the closure of composition of affine functions more explicitly, note that if $f_t(x) = A_t x + b_t$ and $f_{t+1}(x) = A_{t+1} x + b_{t+1}$, then $f_{t+1}(f_t(x)) = A_{t+1} A_t x + \left( A_{t+1} b_t + b_{t+1} \right)$. Therefore, as shown in \Cref{fig:pscan_linderman}, by combining neighboring affine functions together, we obtain transition maps from the initial condition $x_0$ to all other states $x_t$ in $\mathcal{O}(\log T)$ time.
We provide further discussion of the parallel scan algorithm for the interested reader in~\Cref{app:parallel scan}.

Next, we discuss in more detail how the four prominent fixed-point methods discussed in \Cref{tab:fxd_pt_sum} reduce to iterative application of LDSs when used to solve a recursion. Fundamentally, all of these methods parallelize nonlinear recursions by iteratively linearizing and evaluating them, as we indicate in \Cref{alg:fxd_pt}. However, the different methods use different approximations of the Jacobian $\nicefrac{\partial f_{t+1}}{\partial x_t}$ of the dynamics function for their transition matrices $\tilde{A}_{t+1}$.

\begin{algorithm}[t]
    \caption{Fixed-point methods for evaluating sequences using LDSs and parallel scan}\label{alg:fxd_pt}
  \begin{algorithmic}
  \Procedure{ParallelFixedPoint}{$f$, $x_0$, \text{initial guess} $\mathbf{x}_{1:T}^{(0)}$, \text{tolerance} $\epsilon$}
    \For{$i=0,1,\ldots, T$}
        \State $\tilde{A}_{1:T} \gets \textsc{LinearizeDynamics}(f, x_0, \mathbf{x}_{1:T}^{(i)})$ \Comment{For all $t$ in parallel}
        \State $\mathbf{x}_{1:T}^{(i+1)} \gets \textsc{EvaluateLDS}(x_0, \tilde{A}_{1:T}, f, \mathbf{x}_{1:T}^{(i)})$
        \Comment{Using parallel scan}
        \If{$\textsc{ComputeError}(x_0, \mathbf{x}_{1:T}^{(i+1)}, f) < \epsilon$}
            \State \textbf{break}
        \EndIf
    \EndFor
    \State \Return $\mathbf{x}_{1:T}^{(i+1)}$
    \EndProcedure
  \end{algorithmic}
\end{algorithm}

\subsection{Newton iterations}
\citet{Bellen1989}, \citet{GanderVandewalle2007}, \citet{deeppcr}, and \citet{deer2024} demonstrated that when the fixed-point operator $\mathcal{A}$ is constructed as an appropriately designed LDS, built from a linearization of original nonlinear recursion $f$, then each application of $\mathcal{A}$ is equivalent to an iteration of \emph{Newton's root-finding method} on the system of equations given in \cref{eq:high_d_root_finding}.
Specifically, each Newton fixed-point iteration, ${\mathbf{x}_{1:T}^{(i+1)} = \mathcal{A}_{\mathrm{N}}\big( \mathbf{x}_{1:T}^{(i)} \big)}$, is defined by the linear recursion,
\begin{equation}\label{eq:deer}
    x_{t+1}^{(i+1)} = f_{t+1}(x_t^{(i)}) + \dfrac{\partial f_{t+1}}{\partial x_t}(x_t^{(i)}) \left( x_t^{(i+1)} - x_t^{(i)} \right),
\end{equation}
which we recognize as the first-order Taylor expansion of the nonlinear recursion. Because this fixed-point iteration uses a first derivative, we refer to it as a \emph{first-order} fixed-point iteration. We note that for Newton iterations, the transition matrix is exactly the Jacobian of the dynamics function, $\tilde{A}_{t+1} := \tfrac{\partial f_{t+1}}{\partial x_t}(x_t^{(i)})$.

Crucially, because \cref{eq:deer} is a linear dynamical system, it can be evaluated in parallel over the sequence length using an associative scan \citep{blelloch1990prefix}.
However, each iteration requires $\mathcal{O}(T D^2)$ memory to store the $T$ Jacobian matrices, and $\mathcal{O}(T D^3)$ work to compute the matrix-matrix multiplications in the parallel scan. This expense is prohibitive for large state size or sequence length, which motivates the quasi-Newton iterations we discuss next.

\subsection{Quasi-Newton iterations}

The compute and memory costs of full Newton iterations have motivated a wide literature on quasi-Newton methods \citep{NocedalWright}. A particularly simple way to turn the Newton iteration in \cref{eq:deer} into a quasi-Newton iteration that uses a parallel associative scan was proposed by \citet{gonzalez2024scalable}: just use the diagonal of the Jacobian of the dynamics function.
Specifically, each of these quasi-Newton fixed-point iterations $\mathcal{A}_{\mathrm{QN}}$ is given by
\begin{equation}\label{eq:qdeer}
    x_{t+1}^{(i+1)} = f_{t+1}(x_t^{(i)}) + \mathrm{diag}\left[\dfrac{\partial f_{t+1}}{\partial x_t}(x_t^{(i)})\right] \left( x_t^{(i+1)} - x_t^{(i)} \right),
\end{equation}
which we recognize as an LDS with transition matrix $\tilde{A}_{t+1} = \mathrm{diag}\big[\tfrac{\partial f_{t+1}}{\partial x}(x_t^{(i)})\big]$.

With this transition matrix, the parallel scan requires only $\mathcal{O}(T D)$ memory and $\mathcal{O}(T D)$ work, and is therefore more computationally efficient than the full Newton iteration \cref{eq:deer}. However, quasi-Newton methods may take more fixed-point iterations to converge. As we will show in Sections~\ref{sec:conv_rates} and~\ref{sec:tasks}, whether Newton and quasi-Newton methods are faster for evaluating a given sequence depends on many factors including the accuracy of the approximate Jacobian, the stability of the linearized system, the choice of hardware, and the scale of the problem.
In many cases, the large memory consumption of full Newton iterations renders it infeasible for large-scale problems.

Another consideration is the manner in which $\tilde{A}_{t+1}$ is computed. A primary goal of quasi-Newton methods \citep{broyden1970convergence, dennis1996numerical} is to avoid computation of $\nicefrac{\partial f_{t+1}}{\partial x_t}$. Simply computing these Jacobians with autodifferentiation and then taking their diagonal still requires $D$ function calls, which can be costly. However, in sequence modeling with recurrent neural networks (RNNs), the diagonal elements of the Jacobian can often be written in closed form, allowing for its evaluation in a single function call. This approach is taken in \citet{gonzalez2024scalable} and \citet{danieli2025pararnn}. Another approach proposed in \citet{pmcmc} is to stochastically estimate the diagonal using the Hutchinson estimator. This stochastic approach also requires only one function call but is generally applicable even when closed forms of $\tilde{A}_{t+1}$ are not available. We employ this efficient stochastic estimator in all of our experiments. Finally, a third approach comes from the multisecant or Broyden family of methods \citep{fang2009multisecant, walker2011anderson}, including Anderson acceleration \citep{anderson1965iterative, bai2019deep, tang2024accelerating}, which build up estimates of derivative information over the fixed-point iterations.

In general, any structured approximation of the Jacobian matrix that remains 
closed under matrix multiplication\footnote{Closed in the sense that the product, $A_{t+1} A_t$, has the same structure as $A_{t+1}$ and $A_t$, as with diagonal matrices. See~\Cref{app:parallel scan} for more detail.} could be used to form a quasi-Newton fixed-point iteration amenable to a parallel scan.
For example, \citet{pmcmc} introduces a parallel quasi-Newton method where each block of the matrix is diagonal. The broader development of quasi-Newton methods that fit into the unifying LDS framework discussed in this paper is an important direction for future work, which we elaborate on in \Cref{sec:discussion}. However, for simplicity, henceforth in this paper when we refer to ``quasi-Newton iterations,'' we restrict ourselves to the simple diagonal approximation shown in \cref{eq:qdeer}. 

\subsection{Picard iterations}

A seemingly different approach to using fixed-point iterations is Picard iterations, which were used by \citet{shih2023parallel} to parallelize sampling in diffusion models.
Picard iterations are often used in the context of evaluating ODEs, where
\begin{equation*}
    \dot{x} = g(x, t).
\end{equation*}
After Euler discretization with step size $\Delta$, we obtain the discrete-time recursion,
\begin{equation}\label{eq:picard_setup}
    x_{t+1} = x_t + g(x_t, t) \cdot \Delta.
\end{equation}
The Picard fixed-point iteration, $\mathbf{x}_{1:T}^{(i+1)} = \mathcal{A}_{P}(\mathbf{x}_{1:T}^{(i)})$, is then given by,
\begin{equation}\label{eq:picard_shih}
    x_{t+1}^{(i+1)} = x_0 +  \sum_{s=0}^t g(x_s^{(i)}, s) \cdot \Delta.
\end{equation}
Because Picard iterations do not use any derivatives of the discrete-time recursion, we call them \emph{zeroth-order} fixed-point iterations.

\citet{shih2023parallel} proves by induction that for any dynamical system given by \cref{eq:picard_setup}, the fixed-point iterations given by \cref{eq:picard_shih} will converge to the true trajectory in at most $T$ iterations. Similarly, \citet{gonzalez2024scalable} proved that for any dynamical system given by \cref{eq:nonlin_recur}, both the Newton fixed-point iterations given by \cref{eq:deer} and the quasi-Newton fixed-point iterations given by \cref{eq:qdeer} will converge to the true trajectory in at most $T$ iterations. The similarity of these results and techniques begs the question as to how Picard and Newton iterations relate to each other. Our first result shows that Picard iterations are in fact a type of quasi-Newton iteration, where we approximate the Jacobian of the dynamics function by the identity matrix.

\begin{proposition}\label{prop:picard}
    The Picard iteration operator $\mathcal{A}_P$ given by \cref{eq:picard_shih} is a special case of an LDS, \cref{eq:common_form}, where the transition matrix is the identity,
    \begin{equation*}
        \tilde{A}_{t+1} = I_D.
    \end{equation*}
\end{proposition}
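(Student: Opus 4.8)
The plan is to verify directly that substituting the identity matrix as the transition matrix into the common LDS form \cref{eq:common_form} reproduces the Picard recursion \cref{eq:picard_shih}, after unrolling in the sequence index $t$. This is an algebraic identity, so the argument is a direct computation followed by a telescoping sum, rather than anything requiring a deep idea.

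First I would record the specific form of the transition function implied by the Euler discretization in \cref{eq:picard_setup}, namely $f_{t+1}(x_t) = x_t + \Delta\, g(x_t, t)$. Then I would set $A_{t+1} = I_D$ in \cref{eq:common_form} and substitute this $f_{t+1}$, obtaining
\begin{align*}
    x_{t+1}^{(i+1)} &= f_{t+1}(x_t^{(i)}) + I_D\,(x_t^{(i+1)} - x_t^{(i)}) \\
    &= x_t^{(i)} + \Delta\, g(x_t^{(i)}, t) + x_t^{(i+1)} - x_t^{(i)} \\
    &= x_t^{(i+1)} + \Delta\, g(x_t^{(i)}, t).
\end{align*}
The key observation is that the two copies of $x_t^{(i)}$ cancel, leaving a recursion in $t$ (at fixed iteration $i+1$) that is purely additive in the new iterate.

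Next I would unroll this additive recursion over the sequence index, using the fact that the initial condition $x_0^{(i+1)} = x_0$ is held fixed across all fixed-point iterations. Telescoping $x_{s+1}^{(i+1)} = x_s^{(i+1)} + \Delta\, g(x_s^{(i)}, s)$ collapses the left-hand terms and yields
\begin{equation*}
    x_{t+1}^{(i+1)} = x_0 + \sum_{s} \Delta\, g(x_s^{(i)}, s),
\end{equation*}
which is exactly the Picard operator $\mathcal{A}_P$ of \cref{eq:picard_shih}. A clean way to present the collapse is a one-line induction on $t$, with the base case supplied by the fixed initial condition.

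The main obstacle is bookkeeping rather than mathematics: one must confirm that the implicit new-iterate term $x_t^{(i+1)}$ appearing in the common form — the very term that makes \cref{eq:common_form} a genuine LDS to be solved by parallel scan — is precisely what accumulates into the cumulative sum upon unrolling, and one must align the summation index with the convention of \cref{eq:picard_shih} so that the base case matches $x_0$. Once that alignment is fixed, the identity $A_{t+1} = I_D$ follows immediately.
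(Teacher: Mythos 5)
Your proof is correct and is essentially the paper's own argument run in reverse: the paper starts from the Picard sum in \cref{eq:picard_shih}, differences consecutive terms to obtain the one-step recursion, and then adds and subtracts $x_t^{(i)}$ to reveal the form of \cref{eq:common_form} with $A_{t+1}=I_D$, whereas you substitute $A_{t+1}=I_D$ into \cref{eq:common_form} and telescope back up to the cumulative sum. Both arguments rest on the same identification $f_{t+1}(x_t)=x_t+\Delta\,g(x_t,t)$ and the same cancellation of the $x_t^{(i)}$ terms, so the mathematical content is identical (and the $s=0$ versus $s=1$ summation-index mismatch that your alignment remark anticipates is a quirk of \cref{eq:picard_shih} itself, not a gap in your argument).
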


\begin{proof}
    Define $f_{t+1}(x_t) := x_t + g(x_t, t) \cdot \Delta$.
    Then, from \cref{eq:picard_shih} it follows that
    \begin{align*}
        x_{t+1}^{(i+1)} & = x_t^{(i+1)} + g(x_t^{(i)}, t) \cdot \Delta\\
        & = x_t^{(i+1)} - x_t^{(i)} + x_t^{(i)} + g(x_t^{(i)}, t) \cdot \Delta \\
        & = f_{t+1}(x_t^{(i)}) + (x_t^{(i+1)} - x_t^{(i)}).
    \end{align*}
    This is exactly of the form of the generic linear recursion shown in \cref{eq:common_form}, with $\tilde{A}_{t+1} = I_D$.
\end{proof}

An important consequence of \Cref{prop:picard} is that like Newton iterations and quasi-Newton iterations, Picard iterations can also be cast as an LDS. In Newton iterations, the full Jacobian $\nicefrac{\partial f_{t+1}}{\partial x_t}$ is used in LDS; in quasi-Newton iterations, the diagonal approximation $\mathrm{diag}[\nicefrac{\partial f_{t+1}}{\partial x_t}]$ is used; and in Picard iterations, the identity $I_D$ is used. The Picard iteration is more compute and memory efficient than even quasi-Newton, requiring only vector additions via the prefix sum algorithm. However, it is generally a less faithful approximation and takes more iterations to converge, unless the Jacobian is well-approximated by the identity.

\subsection{Jacobi iterations}

Yet another seemingly different class of fixed-point methods are Jacobi iterations \citep{ortega2000iterative}, which were used by \citet{song2021accelerating} to accelerate computation in a variety of settings in machine learning, such as feedforward networks with skip connections.
Jacobi iterations are also a zeroth-order fixed-point method, and are commonly used to solve systems of multivariate nonlinear equations of the form,
\begin{equation*}
    h_t(\mathbf{x}_{1:T}) = 0 \quad \forall t \in \{1, \ldots, T\}.
\end{equation*}
Instead, the Jacobi fixed-point operator,~$\mathbf{x}_{1:T}^{(i+1)}=\mathcal{A}_J(\mathbf{x}_{1:T}^{(i)})$, solves the following system of $T$ \emph{univariate} equations \emph{in parallel} to obtain $\mathbf{x}_{1:T}^{(i+1)}$,
\begin{align}
    h_t^{(i)} \big(x_1^{(i)}, \ldots, x_{t-1}^{(i)}, \, x_t, \, x_{t+1}^{(i)}, \ldots, x_T^{(i)}\big) = 0 
    \quad \forall t \in \{1, \ldots, T\}
    \label{eq:jacobi}
\end{align}

\citet{song2021accelerating} considers in particular the problem of solving recurrence relations of the form ${x_{t+1} = f_{t+1}(\mathbf{x}_{1:t})}$, and proves that, for such a system,
Jacobi iterations converge in at most $T$ iterations.
This result is directly analogous to similar such results and proofs in \citet{gonzalez2024scalable} and \citet{tang2024accelerating} for Newton and quasi-Newton iterations and \citet{shih2023parallel} for Picard iterations. 
In fact, in the context of iteratively applying LDSs to parallelize Markovian state space models, we prove that Jacobi iterations are a type of degenerate quasi-Newton iterations, where we ``approximate'' the Jacobian of the dynamics function by zero.

\begin{proposition}\label{prop:jacobi}
    When applied to a Markovian state space model as in \cref{eq:nonlin_recur}, the Jacobi iteration operator $\mathcal{A}_J$ specified by \cref{eq:jacobi} is a special case of the common form, \cref{eq:common_form}, where,
    \begin{equation*}
        \tilde{A}_{t+1} = 0.
    \end{equation*}
\end{proposition}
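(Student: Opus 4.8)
The plan is to follow the same template as the proof of \Cref{prop:picard}: make the Jacobi update fully explicit for the Markovian system, and then match it term-by-term against the common form \cref{eq:common_form}. First I would specialize the abstract system $h_t(\mathbf{x}_{1:T}) = 0$ to the root-finding equations \cref{eq:high_d_root_finding} that characterize the true trajectory. Concretely, after relabeling, the equation that pins down the $t$-th state is $h_t(\mathbf{x}_{1:T}) = x_t - f_t(x_{t-1}) = 0$ for $t \in \{1, \ldots, T\}$, with $x_0$ the given initial condition. The crucial structural observation is that, because the system is Markovian, $h_t$ depends only on the two coordinates $x_t$ and $x_{t-1}$, and on no other entries of $\mathbf{x}_{1:T}$.

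Next I would carry out the Jacobi sweep. By definition \cref{eq:jacobi}, the operator $\mathcal{A}_J$ updates the $t$-th coordinate by solving $h_t = 0$ for $x_t$ while freezing every other coordinate at its iteration-$i$ value. Substituting the frozen value $x_{t-1}^{(i)}$ into $h_t$, the univariate equation collapses to $x_t - f_t(x_{t-1}^{(i)}) = 0$, whose unique solution is $x_t^{(i+1)} = f_t(x_{t-1}^{(i)})$. Reindexing $t \mapsto t+1$ yields $x_{t+1}^{(i+1)} = f_{t+1}(x_t^{(i)})$. Comparing this against \cref{eq:common_form}, the Jacobi update contains no correction term proportional to $(x_t^{(i+1)} - x_t^{(i)})$, so it coincides with the common form exactly when $A_{t+1} = 0$, which is the claim.

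The one genuinely load-bearing step—and the place I would be most careful—is the decoupling argument. In a generic non-Markovian system the $t$-th equation could depend on many coordinates, so freezing the others would still leave a nontrivial function of the full previous guess, and the update would not reduce to the clean form above. What forces $A_{t+1} = 0$ is precisely that the Markovian $h_t$ sees only $x_{t-1}$ among the frozen coordinates, so the update becomes a plain evaluation of $f_t$ with no dependence on $x_t^{(i)}$ at all. I would therefore flag the Markovian hypothesis in the statement as exactly the condition that makes the correction term vanish, consistent with the surrounding remark that Jacobi iterations need $T$ full sweeps for information to propagate from $x_0$ to $x_T$.

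I do not anticipate any hidden difficulty beyond this: the argument is essentially a short substitution-and-match computation, structurally parallel to \Cref{prop:picard}, and the uniqueness of the solution to the scalar (in each block) equation $x_t - f_t(x_{t-1}^{(i)}) = 0$ is immediate since $x_t$ appears linearly.
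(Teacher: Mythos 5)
Your proposal is correct and follows essentially the same route as the paper's proof: specialize the Jacobi equations to the Markovian root-finding system \cref{eq:high_d_root_finding}, solve the $t$-th equation with the other coordinates frozen to obtain $x_{t+1}^{(i+1)} = f_{t+1}(x_t^{(i)})$, and observe that the absence of any $(x_t^{(i+1)} - x_t^{(i)})$ term forces $A_{t+1}=0$ in \cref{eq:common_form}. Your version is merely more explicit than the paper's (which states the reduced update directly from the Markov property), and your emphasis on the Markovian hypothesis as the load-bearing step is exactly the right reading.
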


\begin{proof}
    In a Markovian state space model, the recurrence relation always takes the form specified in \cref{eq:nonlin_recur}, i.e. $x_{t+1} = f_{t+1}(x_{t})$. Thus, Jacobi iterations take the simple form
    \begin{equation*}
        x_{t+1}^{(i+1)} = f_{t+1}(x_t^{(i)}).
    \end{equation*}
    Because $x_{t+1}^{(i+1)}$ does not depend on $x_t^{(i+1)}$, we see that the transition matrix is zero.
\end{proof}

We have shown that evaluating nonlinear recursions via Newton, quasi-Newton, Picard, and Jacobi fixed-point iterations reduces to iteratively solving linear dynamical systems.
An important corollary is that these methods are guaranteed to converge in all problem settings in at most $T$ iterations~\citep[Prop. 1]{gonzalez2024scalable}. However, the precise convergence rates of the different fixed-point methods will depend on the accuracy of the Jacobian approximation, as we show next.

\section{Analysis of convergence rates}\label{sec:conv_rates}

In this section, we analyze the convergence properties of the fixed-point methods above. We show that the convergence rate of these fixed-point methods depends on two factors: (i)~how well the transition matrix $\tilde{A}_{t+1}$ approximates the true dynamics Jacobian $A_{t+1} := \nicefrac{\partial f_{t+1}}{\partial x_t}$; and (ii) the stability of the LDS with transition matrices $\tilde{A}_{1:T}$.

\subsection{Derivation of fixed-point iterations from high-dimensional root-finding.}

We first review the derivation of the Newton fixed-point iterations in \cref{eq:deer} from the system of equations given by \cref{eq:high_d_root_finding}, highlighting how the other fixed-point iterations naturally arise from different approximations of $\nicefrac{\partial f_{t+1}}{\partial x_t}.$ This background will be useful for proving convergence rates in \Cref{ssc:conv_rates}.

Returning to the system of nonlinear equations in \cref{eq:high_d_root_finding}, we define the residual function $\mathbf{r}: \mathbb{R}^{TD} \mapsto \mathbb{R}^{TD}$,
\begin{equation}\label{eq:resid}
     \mathbf{r}(\mathbf{x}_{1:T}) = [x_1 - f_1(x_0), \, x_2 - f_2(x_1), \, x_3 - f_3(x_2), \ldots, x_T - f_T(x_{T-1})].
\end{equation}
We define $\mathbf{J}(\mathbf{x}_{1:T}) \in \mathbb{R}^{TD \times TD}$ to be the Jacobian of the residual $\mathbf{r}(\mathbf{x}_{1:T})$,
\begin{equation}\label{eq:bold_J}
    \mathbf{J}(\mathbf{x}_{1:T}) := \dfrac{\partial \mathbf{r}}{\partial \mathbf{x_{1:T}}}(\mathbf{x}_{1:T}) =\begin{pmatrix}
        I_D & 0 & 0 &  \hdots & 0 & 0\\
        -\frac{\partial f_2}{\partial x} (x_1) & I_D & 0 & \hdots & 0 & 0\\
        0 & -\frac{\partial f_3}{\partial x} (x_2) & I_D & \hdots & 0 & 0\\
        \vdots & \vdots & \vdots & \ddots & \vdots & \vdots \\
        0 & 0 & 0 & \hdots & I_D & 0 \\
        0 & 0 & 0 & \hdots & -\frac{\partial f_T}{\partial x} (x_{T-1}) & I_D \\
    \end{pmatrix}.
\end{equation}

Therefore, in this notation, every Newton's method update takes the form
\begin{equation*}
    \mathbf{x}_{1:T}^{(i+1)} = \mathcal{A}_N(\mathbf{x}_{1:T}^{(i)}) 
    := \mathbf{x}_{1:T}^{(i)} - \mathbf{J}(\mathbf{x}_{1:T}^{(i)})^{-1} \mathbf{r}(\mathbf{x}_{1:T}^{(i)}).
\end{equation*}
Using matrix manipulations, it can be shown \citep{deeppcr, deer2024, gonzalez2024scalable} that each Newton fixed-point iteration is given by the linear recursion in \cref{eq:deer}. 

Analogously, for the other fixed point iterations considered in this paper (cf. \Cref{tab:fxd_pt_sum}), which use transition matrices $\tilde{A}_{t+1}$ to approximate $\nicefrac{\partial f_{t+1}}{\partial x_t}$, we can define a block-bidiagonal $\widetilde{\mathbf{J}} \in \mathbb{R}^{TD \times TD}$ simply by substituting $\tilde{A}_{t+1}(x_t)$ for $\nicefrac{\partial f_{t+1}}{\partial x_t}$ in \cref{eq:bold_J}.
The corresponding fixed point iteration $\mathcal{A}$ takes the form
\begin{equation}\label{eq:gen_newton}
    \mathcal{A}(\mathbf{x}_{1:T}^{(i)}) := \mathbf{x}_{1:T}^{(i)} - \widetilde{\mathbf{J}}(\mathbf{x}_{1:T}^{(i)})^{-1} \mathbf{r}(\mathbf{x}_{1:T}^{(i)}).
\end{equation}
Different fixed-point methods $\mathcal{A}$ give rise to different matrices $\widetilde{\mathbf{J}}$, which impacts their convergence rates.

\subsection{Convergence rates of fixed-point iterations}\label{ssc:conv_rates}

We derive convergence rates for all fixed-point operators discussed in this paper.
The analysis follows standard techniques in the literature (cf. \citet[Thm 5.4.1]{kelley1995iterative}), which \citet[Prop 4.]{lu2025parasolver} applied to parallelizing sequential models with Picard iterations.
Our unifying framework allows us to see that their analysis generalizes straightforwardly.

Let $\mathbf{e}^{(i)}_{1:T} := \mathbf{x}_{1:T}^{(i)} - \mathbf{x}_{1:T}^{\star} \in \mathbb{R}^{TD}$ denote the error of a trajectory $\mathbf{x}_{1:T}^{(i)}$.
Note that the error depends on the current trajectory and the dynamics functions, but we suppress these dependencies to keep notation lightweight. 
For any of the fixed-point methods discussed in this paper, we can bound the convergence rate of this error.
\begin{proposition}[c.f. Proposition 4 of \citet{lu2025parasolver}]\label{prop:ConvRates}
    Consider a fixed-point solver with updates given by \cref{eq:gen_newton} for some matrix $\widetilde{\mathbf{J}}(\mathbf{x}_{1:T}^{(i)})$ with form specified by \cref{eq:bold_A}. 
    Let $L$ be the maximum of the Lipschitz constants of $\nicefrac{\partial f_t}{\partial x_{t-1}}$.
    Then $\| \mathbf{e}^{(i+1)}_{1:T} \|_2$ satisfies
    \begin{equation}\label{eq:prop3}
        \| \mathbf{e}^{(i+1)}_{1:T} \|_2 \leq \left\| \widetilde{\mathbf{J}}(\mathbf{x}_{1:T}^{(i)})^{-1} \right\|_2 \cdot \left( \left\| \widetilde{\mathbf{J}}(\mathbf{x}_{1:T}^{(i)}) - \mathbf{J}(\mathbf{x}_{1:T}^{(i)})  \right\|_2  \| \mathbf{e}^{(i)}_{1:T} \|_2 +  \frac{L}{2} (\| \mathbf{e}^{(i)}_{1:T} \|_2^2) \right),
    \end{equation}
    where $\| \cdot \|_2$ denotes the spectral norm of a matrix and the $\ell_2$ norm of a vector.
\end{proposition}
\begin{proof}
    Starting from \cref{eq:gen_newton}, we subtract $\mathbf{x}_{1:T}^{\star}$ from both sides to obtain
    \begin{equation*}
        \mathbf{e}^{(i+1)}_{1:T} = \mathbf{e}^{(i)}_{1:T} - \widetilde{\mathbf{J}}(\mathbf{x}_{1:T}^{(i)})^{-1} \mathbf{r}(\mathbf{x}_{1:T}^{(i)}).
    \end{equation*}
    Next, we Taylor expand $\mathbf{r}(\cdot)$ around $\mathbf{x}_{1:T}^{(i)}$ to obtain
    \begin{equation*}
        \mathbf{r}(\mathbf{x}_{1:T}^{\star}) = \mathbf{r}(\mathbf{x}_{1:T}^{(i)}) - \mathbf{J}(\mathbf{x}_{1:T}^{(i)}) \mathbf{e}^{(i)}_{1:T} + \mathbf{R}(\mathbf{e}^{(i)}_{1:T}),
    \end{equation*}
    where $\mathbf{R}(\mathbf{e}^{(i)}_{1:T})$ is the second-order remainder function and has norm bounded by $\nicefrac{\| \mathbf{e}^{(i)}_{1:T} \|_2^2}{2}$ times the Lipschitz constant of $\mathbf{J}(\mathbf{x}_{1:T}^{(i)})$, which Theorem 3 of \citet{gonzalez2025predictability} shows is bounded by $L$.
    Since $\mathbf{r}(\mathbf{x}_{1:T}^{\star}) = \mathbf{0}$, it follows that
    \begin{equation}\label{eq:error_decomp}
        \mathbf{e}^{(i+1)}_{1:T} = \widetilde{\mathbf{J}}(\mathbf{x}_{1:T}^{(i)})^{-1} \bigg( \left( \widetilde{\mathbf{J}}(\mathbf{x}_{1:T}^{(i)}) - \mathbf{J}(\mathbf{x}_{1:T}^{(i)}) \right)
        \mathbf{e}^{(i)}_{1:T} + \mathbf{R}(\mathbf{e}^{(i)}_{1:T}) \bigg).
    \end{equation}
    The result follows by taking norms on both sides and using the triangle inequality.
\end{proof}

\subsection{Intuitions about rates of convergence}\label{ssc:intuitions}

\Cref{prop:ConvRates} shows that, to first order, the error is determined by the discrepancy between the chosen linear operator $\widetilde{\mathbf{J}}$ and the true Jacobian $\mathbf{J}$ of the residual.
Moreover, we see that as $\| \mathbf{e}^{(i)}_{1:T} \|_2$ approaches zero, the first-order term, which is linear in $\| \mathbf{e}^{(i)}_{1:T} \|_2$, must eventually (under strong enough continuity assumptions) dominate the second-order term, which is quadratic in $\| \mathbf{e}^{(i)}_{1:T} \|_2$.
Typically, we would say the rate of decrease in $\| \mathbf{e}^{(i)}_{1:T} \|_2$ approaches an asymptotic linear rate $\gamma$ given by
\begin{equation}\label{eq:lin_rate}
    \gamma := \left\| \widetilde{\mathbf{J}}(\mathbf{x}_{1:T}^{\star})^{-1} \right\|_2  \left\| \widetilde{\mathbf{J}}(\mathbf{x}_{1:T}^{\star}) - \mathbf{J}(\mathbf{x}_{1:T}^{\star})  \right\|_2.
\end{equation}
Discussions of asymptotic linear rate are subtle in our setting, where all fixed-point methods are guaranteed to converge in $T$ iterations (see our discussion in \Cref{app:parallel_chord}).
Nonetheless, the functional form of $\gamma$ provides useful intuition about the convergence rates of different fixed-point methods.

Let's study the two factors in the asymptotic linear rate $\gamma$, which must be less than one to be meaningful.
\citet{lu2025parasolver} left quantifying these two factors for future work, which we now bring to fruition.
The first factor is the spectral norm of the inverse of the approximate Jacobian. 
Because $\widetilde{\mathbf{J}}(\mathbf{x}_{1:T})$ is a block bidiagonal matrix (see~\cref{eq:bold_J} and the definition that follows), its inverse has a block lower triangular structure of the form 
\begin{equation}\label{eq:A_inv}
    \widetilde{\mathbf{J}}(\mathbf{x}_{1:4})^{-1} = \begin{pmatrix}
    I_D & 0 & 0 & 0 \\
    \tilde{A}_2 & I_D & 0 & 0 \\
    \tilde{A}_3 \tilde{A}_2 & \tilde{A}_3 & I_D & 0 \\
    \tilde{A}_4 \tilde{A}_3 \tilde{A}_2 & \tilde{A}_4 \tilde{A}_3 & \tilde{A}_4 & I_D
    \end{pmatrix},
\end{equation}
shown above for $T=4$.

From \cref{eq:A_inv}, we see that the blocks of $ \widetilde{\mathbf{J}}(\mathbf{x}_{1:T})^{-1}$ are products of the transition matrices $\tilde{A}_{t+1}$ from the chosen fixed-point method. In particular, if the chosen fixed point method results in an \emph{unstable} LDS with $\| \tilde{A}_{t+1}\|_2 > 1$ for many time points, the spectral norm of $\widetilde{\mathbf{J}}(\mathbf{x}_{1:T})^{-1}$ can be much larger than one. Intuitively, fixed-point methods resulting in unstable LDSs will have slower rates of convergence since instability leads to numerical divergence. 
We elaborate on this point in \Cref{app:J_inv}.

The second factor is the spectral norm of the difference between the approximate and true \textit{residual} Jacobian. The following lemma shows that we can control this factor in terms of the spectral norms of the differences between the approximate and true \textit{dynamics} Jacobians. Intuitively, the rate of convergence will be faster if the transition matrices $\tilde{A}_{t+1}$ are closer to the true dynamics Jacobian $A_{t+1} := \nicefrac{\partial f_{t+1}}{\partial x_t}$ in spectral norm.
\begin{lemma}\label{lem:diff}
    If $\widetilde{\mathbf{J}}(\mathbf{x}_{1:T})$ is given by \cref{eq:bold_A} and $\mathbf{J}(\mathbf{x}_{1:T})$ is given by \cref{eq:bold_J}, then
    \begin{equation*}
        \mathrm{diff}(\mathcal{A}) := 
        \left\| \widetilde{\mathbf{J}}(\mathbf{x}_{1:T}) - \mathbf{J}(\mathbf{x}_{1:T})  \right\|_2 = \max_{1 \leq t \leq T-1} \left\| \tilde{A}_{t+1}(x_t) - A_{t+1}(x_t) \right\|_2,
    \end{equation*}
    where $A_{t+1} := \frac{\partial f_{t+1}}{\partial x_t} (x_t)$.
\end{lemma}
\begin{proof}
    \Cref{app:proof_diff}.
\end{proof}

\Cref{prop:ConvRates} is an upper bound on the norm of the error of each fixed-point iterate.
As an upper bound, this result can not always fully predict the precise trajectory of the norm of the error (see~\Cref{app:lims}).
Nevertheless, this proposition provides helpful intuitions that are borne out in the case studies below.

\section{Case studies: performance of the different fixed-point methods}\label{sec:tasks}

In this section, we consider three empirical case studies that illustrate how the unifying framework and convergence analysis presented in this paper provides guidance about which fixed-point schemes will excel in which settings.
This concordance is based on the structure of the Jacobian of $f_{t+1}$ and the relative computational cost of different fixed-point methods.
In a nutshell, we advise:
\begin{center}
    \textit{Use the simplest approximate Jacobian as possible, but no simpler.}
\end{center} 
To elaborate: simpler approximate Jacobians
are less computationally expensive, meaning that each fixed-point iteration is more efficient. So, if a lower-order fixed-point method (e.g., Picard or Jacobi) converges in a small number of iterations, it may reach $\mathbf{x}^{\star}$ in less time than higher-order methods. However, if the higher-order fixed-point method (e.g. Newton or quasi-Newton) converges in far fewer iterations than lower-order methods, then the increased computation of the higher-order fixed-point method may be worthwhile. 
As supported by the theoretical analysis in \Cref{sec:conv_rates}, the number of iterations needed for a fixed-point method to converge is related to 
the difference in spectral norm between $\tilde{A}_t$ and $A_t := \nicefrac{\partial f_{t+1}}{\partial x_t}$. 
We support this intuition with the following case studies,
with further experimental details provided in \Cref{app:supp_exp}.
Our code is available at \url{https://github.com/lindermanlab/parallelizing_with_lds}

\subsection{Case study \#1: Solving the group word problem with Newton iterations}\label{sec:GrpWordProb}
Newton iterations should outperform quasi-Newton and Picard iterations in settings where the Jacobian of the recursion, $f_{t+1}$, is not well approximated by its diagonal, the identity matrix, or the zero matrix.
One example of such a recursion is the \textit{group word problem}, which has been used to theoretically and empirically assess the limits of sequential modeling architectures for state-tracking tasks~\citep{kim2023entity, liu2023transformers, merrill2024illusion, grazzi2024unlocking, schone2025implicit}. In the sequence-modeling community, the term ``group word problem'' is defined as follows.

\begin{definition}[Group Word Problem] Let $G$ be a finite group and let $g_1, g_2, \ldots, g_T$ be a sequence of group elements. The group word problem is to evaluate the product $g_{1} \cdot g_{2} \cdots g_{T}$. Since each $g_{t} \in G$, the product of these group elements belongs to $G$ as well.
\end{definition}

\citet{merrill2024illusion} emphasizes that nonlinear RNNs in both theory and practice are able to learn the group word problem in arbitrary groups to high accuracy with only a single layer, whereas compositions of popular linear RNNs linked by nonlinearities, such as S4 \citep{gu2022s4} and Mamba\footnote{Mamba allows input-dependent dynamics matrices but they must be diagonal, which prevents a single Mamba layer from implementing the particular LDS in~\Cref{prop:cayley}, which uses permutation matrices. \citet{merrill2024illusion} also demonstrate that a linear time-varying system with a dense transition matrix can learn the group word problem. }~\citep{mamba}, require a number of layers that grows with $T$.
\citet{merrill2024illusion} emphasizes that recurrent architectures with nonlinear transitions are well-suited for solving the group word problem, because in theory and practice, such architectures can learn the group word problem to high accuracy with a single layer. Other literature has explored the value of matrix-valued states \citep{beck2024xlstmextendedlongshortterm, grazzi2024unlocking}. However, in \Cref{prop:cayley} below, we show that neither nonlinearity nor matrix-valued states are needed to understand or solve the group word problem. Instead, the problem can be formulated as an LDS with vector-valued states and input-dependent transition matrices.

\begin{proposition}\label{prop:cayley}
    Let $G$ be a finite group. Then there exists some $D \leq |G|$ for which we can represent the group word problem as a time-varying LDS, $f_{t+1}(x_t) = A_{t+1} x_t$, with states $x_t \in \mathbb{R}^{D}$ denoting the running product of group elements and transition matrices $A_{t+1} \in \mathbb{R}^{D \times D}$ that depend on the input $g_{t+1}$.
\end{proposition}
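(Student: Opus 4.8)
The plan is to prove this with the (right) regular representation of $G$, which is essentially Cayley's theorem restated in terms of permutation matrices. Write $n = |G|$ and take $D = n$, so the bound $D \leq |G|$ holds (with equality). I would index the coordinates of $\mathbb{R}^D$ by the group elements, letting $e_h \in \mathbb{R}^D$ be the standard basis vector associated to $h \in G$. The state is then a one-hot encoding of the running product: set $x_0 = e_{\mathrm{id}}$ and, writing $p_t := g_1 g_2 \cdots g_t$ for the running product (with $p_0 = \mathrm{id}$), maintain the invariant $x_t = e_{p_t}$, which is exactly the sense in which $x_t$ ``denotes the running product of group elements.''

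The key definition is the transition matrix. For each group element $g$, let $R_g$ be the permutation matrix acting on basis vectors by right multiplication, $R_g\, e_h = e_{h g}$; this is a permutation matrix because right multiplication by $g$ is a bijection of $G$ with inverse $R_{g^{-1}}$. Setting $A_{t+1} := R_{g_{t+1}}$ makes the transition matrix an orthogonal element of $\mathbb{R}^{D\times D}$ depending only on the input $g_{t+1}$, with no nonlinearity, so the recursion has the claimed linear form $f_{t+1}(x_t) = A_{t+1} x_t$. Correctness is then a one-line induction: assuming $x_t = e_{p_t}$, we get $x_{t+1} = A_{t+1} x_t = R_{g_{t+1}} e_{p_t} = e_{p_t g_{t+1}} = e_{p_{t+1}}$, so the invariant is preserved and the terminal state $x_T = e_{g_1 \cdots g_T}$ encodes the answer to the group word problem, recoverable as its unique nonzero coordinate.

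The only genuine point requiring care is orientation. Because the running product accumulates on the right, $p_{t+1} = p_t \cdot g_{t+1}$, I must use right multiplication $R_g$ rather than the more familiar left regular representation $L_g e_h = e_{gh}$; feeding the inputs through $L_{g_{t+1}}$ in sequence would instead produce the reversed product $g_T \cdots g_1$, which differs unless $G$ is abelian. With the right-multiplication convention the induction goes through directly, the only group axiom invoked being associativity, which is precisely what makes the running product $p_t$ well-defined. I expect this handedness bookkeeping to be the sole obstacle; the construction is otherwise an immediate consequence of Cayley's theorem.

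Finally, to explain why the inequality $D \leq |G|$ can be strict rather than always tight, I would add a brief remark that the regular representation is generically reducible, so any faithful representation of $G$ of smaller degree---for example a faithful permutation action on the cosets of a core-free subgroup, or a lower-dimensional faithful linear representation---yields the same LDS form with $D < |G|$. This refinement is not needed for the stated existence claim, since the regular representation already realizes $D = |G|$, so I anticipate no difficulty there.
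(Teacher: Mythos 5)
Your proof is correct and takes essentially the same approach as the paper: both invoke Cayley's theorem and realize the group word problem as an LDS whose input-dependent transition matrices are permutation matrices. The only substantive differences are in bookkeeping: the paper encodes the state via the tabular representation (an $x_0$ with $D$ distinct entries, which is what lets any faithful embedding into $S_D$ with $D \leq |G|$ work directly), whereas you use one-hot states under the right regular representation (pinning $D = |G|$ and deferring $D < |G|$ to a remark), and you make explicit the left/right multiplication orientation that the paper's proof glosses over.
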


\begin{proof}
    By Cayley's theorem, any finite group $G$ can be embedded in a symmetric group~$S_D$, for some $D \leq |G|$. Therefore, by choosing the initial state $x_0 \in \mathbb{R}^D$ to have $D$ distinct entries (a ``vocabulary'' of size $D$), we can use the tabular representation of permutations \citep[][eq. 1.5.2] {artin2011abstract} to represent an element of $S_D$ as $x_t$ (by a permutation of the elements of $x_0$).
    We can also choose $A_{t+1} \in \mathbb{R}^{D \times D}$ to be the permutation matrix corresponding to the embedding of $g_{t+1}$ in $S_D$, since any element of $S_D$ can be represented as a $D \times D$ permutation matrix (e.g., see~\Cref{fig:s5_results}B). Consequently $x_t = A_t A_{t-1} \hdots A_2 A_1 x_0$ is an embedding of an element of $G$ in $S_D$ in the tabular representation. In fact, $x_t \in \mathbb{R}^D$ represents the running product $g_1 g_2 \hdots g_{t-1} g_t$, which is precisely the goal of the group word problem.
\end{proof}

\begin{figure}[ht]
    \centering
\includegraphics[width=\linewidth,height=0.42\textheight,keepaspectratio, trim=1mm 2mm 1mm 3mm,clip]
    {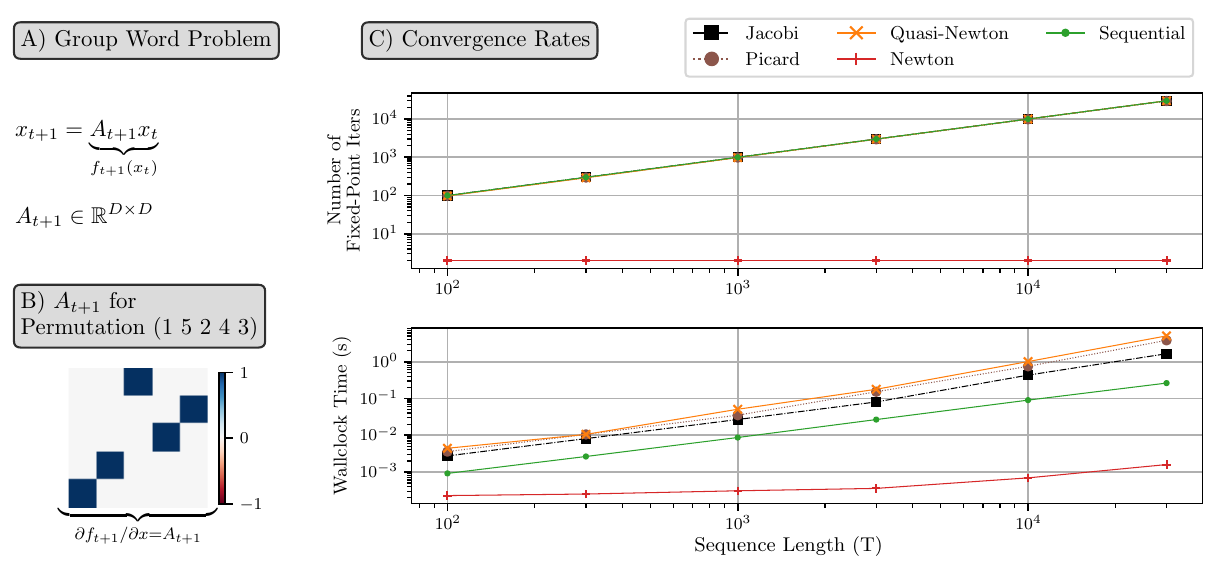}
    \caption{\textbf{A single Newton iteration solves the $S_5$ group word problem, whereas the number of iterations required for the other methods increases with sequence length.} We consider the task of evaluating the product of $S_5$ group elements. \textbf{A:} The group word problem can be expressed as an LDS with input-dependent state-transition matrices. \textbf{B:} An example input-dependent transition matrix $A_t$ for permutation $(1\ 5\ 2\ 4\ 3)$, in cycle notation. \textbf{C:} For each fixed-point method and a range of sequence lengths, $T$, we compute the median (over ten random seeds) number of fixed-point iterations to converge (top) and the median wall-clock time (bottom). While a single Newton iteration is sufficient to solve the $S_5$ problem, the number of iterations required for the other methods increases with the sequence length.}
    \label{fig:s5_results}
\end{figure}

Though we have cast the group word problem as a time-varying LDS with $f_{t+1}(x_t) = A_{t+1} x_t$, we can still evaluate this recursion with any of the fixed-point methods described above. 
Since the dynamics are linear, the Newton iteration corresponds to evaluating the LDS with a parallel scan, and it converges in one iteration.
While other methods would require more iterations to converge, they could still be more efficient in wall-clock time, since they use less memory and compute per iteration. 
However, we can use the Jacobian approximation error of the different fixed-point methods (\Cref{lem:diff}) to get a sense if the other fixed-point methods are likely to excel in this setting.

The state transition matrices of the group word problem are permutation matrices with spectral norm one, and so $\mathrm{diff}(\mathcal{A}_J) = 1$. Furthermore, since with high probability there will be a state transition matrix with no non-zero entries on the diagonal, it follows that $\mathrm{diff}(\mathcal{A}_{QN}) = 1$ while $\mathrm{diff}(\mathcal{A}_{P}) = 2$. Since we would expect to need $\mathrm{diff}(\mathcal{A}) < 1$ for a fixed-point method $\mathcal{A}$ to be effective, our theoretical analysis in \Cref{sec:conv_rates} suggests that none of the fixed-point methods other than Newton will be effective on the group word problem.

We test this hypothesis with a simple experiment simulating the $S_5$ word problem, a standard problem in the sequence modeling literature \citep{merrill2024illusion, grazzi2024unlocking}. In this setting, \Cref{fig:s5_results} shows that quasi-Newton, Picard, and Jacobi iterations require nearly $T$ iterations to converge.
On the other hand, we see that Newton's method solves the $S_5$ word problem with just one fixed-point iteration, as expected since the true dynamics are linear.
The speed-up is also apparent in the wall-clock time comparison, where we see that Newton is faster than other methods, regardless of $T$.

\subsection{Case Study \#2: Picard iterations struggle to parallelize RNNs}\label{sec:GruExp}
\begin{figure}[ht]
    \centering
    \includegraphics[width=\linewidth,height=0.42\textheight,keepaspectratio]
    {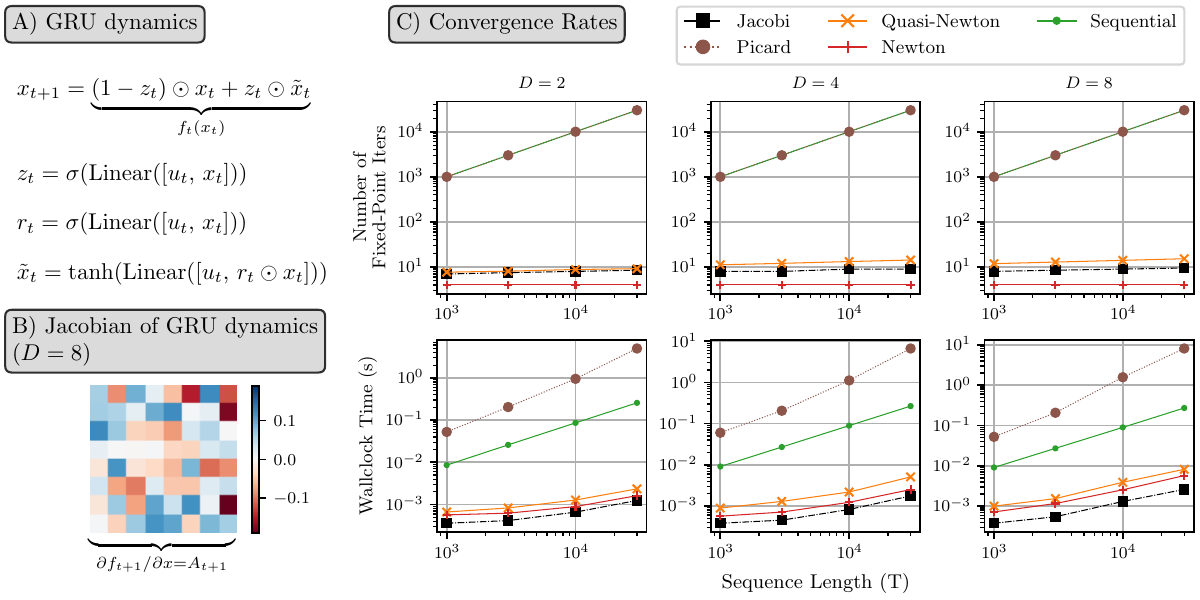}
    \caption{\textbf{Picard iterations struggle to parallelize RNNs.} We evaluate GRUs with random parameter initialization for different sequence lengths $T$ and hidden state sizes $D$. \textbf{A:} The nonlinear dynamics of a GRU, following \citet{Feng2024}, where $x_t$ is the hidden state, $u_t$ is the input, and the notation $\mathrm{Linear}[\cdot, \cdot]$ indicates a linear readout from the concatenation of two vectors. \textbf{B:} A representative Jacobian matrix $\nicefrac{\partial f_{t}}{\partial x}$ from a GRU trajectory, which is not well approximated by the identity matrix. \textbf{C:}  For each fixed-point method and a range of sequence lengths, $T$, and state sizes, $D$, we compute the median (over ten random seeds) number of fixed-point iterations to converge (top row) and the median wall-clock time (bottom row). Picard iterations take nearly $T$ iterations to converge, while the other fixed point methods yield order-of-magnitude speed-ups over sequential evaluation}
    \label{fig:gru_init}
\end{figure}

\revision{Next, we consider a task where Picard iterations struggle and other fixed-point methods excel,} namely, parallelizing recurrent neural networks (RNNs) like the Gated Recurrent Unit \citep[GRU;][]{gru}.
We evaluate GRUs with random parameter initialization for different hidden dimension sizes $D$ and sequence lengths $T$ using sequential evaluation as well as fixed-point iterations.
\Cref{fig:gru_init}B shows that at initialization the Jacobian has small entries (on the order of $0.1$).
In this regime, $\mathrm{diff}(\mathcal{A}_J)$ and $\mathrm{diff}(\mathcal{A}_{QN})$ are less than one, while $\mathrm{diff}(\mathcal{A}_P)$ is greater than one. (See also, \Cref{fig:diff} in \Cref{app:ExtraGru}.) Thus, as expected, Newton, quasi-Newton, and Jacobi iterations excel in this setting, while Picard iterations converge prohibitively slowly, as shown in \Cref{fig:gru_init}C.

\subsection{Case Study \#3: Jacobi iterations struggle to parallelize discretized Langevin diffusion}\label{sec:WellExp}

\begin{figure}[ht]
    \centering
    \includegraphics[width=0.9\linewidth]
    {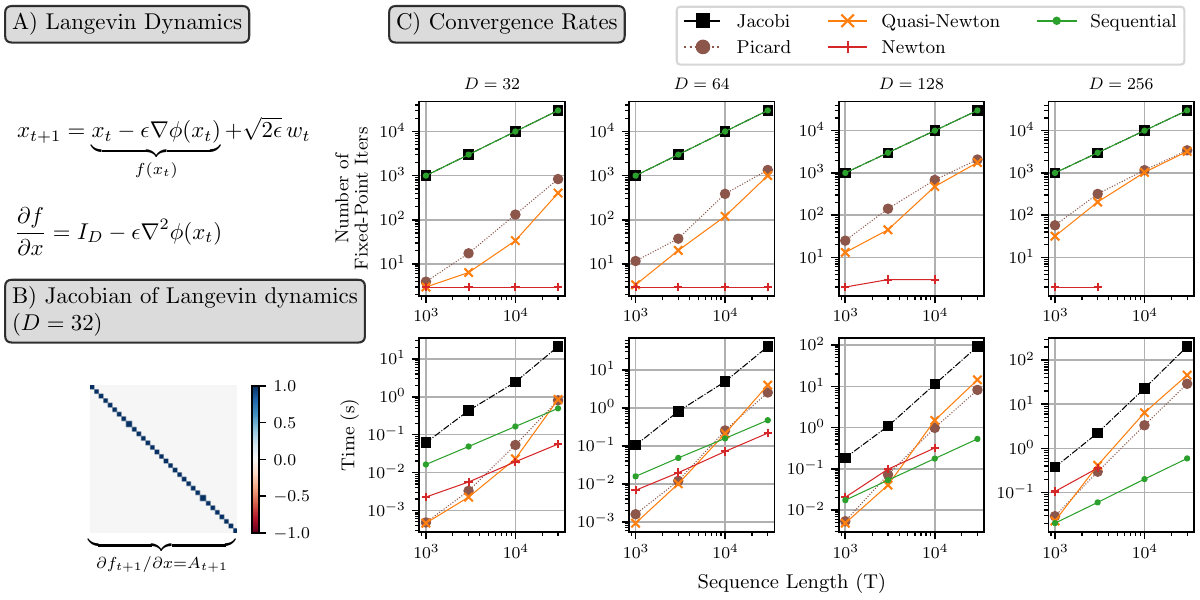}
    \caption{\textbf{Jacobi iterations struggle when the dynamics Jacobian is close to the identity.} We evaluate Langevin dynamics for a potential $\phi$.
    \textbf{A:} The nonlinear dynamics of Langevin dynamics for a potential $\phi$ and step size $\epsilon$, where $x_t$ is the state and $w_t$ is Gaussian noise.
    \textbf{B:} The Jacobian for Langevin dynamics is well-approximated by the identity matrix, especially for small step size $\epsilon=\num{1e-5}$.
    \textbf{C:} We evaluate Langevin dynamics for larger dimensions, plotting the median of 10 random seeds. Jacobi iteration consistently take $T$ steps and are always slower than sequential, while the other fixed-point methods converge in fewer $T$ steps and can be faster than sequential. The missing Newton iteration points indicate the GPU ran out of memory.
    }
    \label{fig:2well}
\end{figure}

Based on the theoretical analysis presented in \Cref{prop:ConvRates}, we expect that if the Jacobian of the dynamics function is well-approximated by the identity matrix, then Picard should converge relatively quickly and at considerably lower cost, \revision{especially when compared to the other zeroth-order method of Jacobi iterations}. A canonical example of such a system where the dynamics are close to identity comes from a discretization of Langevin dynamics \citep{Langevin1908Eng, Friedman2022Langevin}. Langevin dynamics are a workhorse for MCMC \citep{besag1994comment} and motivated the development of score-matching methods \citep{song2019generative}, which are closely related to diffusion models~\citep{sohl2015deep,ho2020denoising,song2021score}.

In Langevin dynamics for a potential $\phi$, the state $x_t$ evolves according to a nonlinear recursion with dynamics,
\begin{equation*}
    f_{t+1}(x_t) = x_t - \epsilon \nabla \phi(x_t) + \sqrt{2 \epsilon} w_t,
\end{equation*}
where $\epsilon$ is the step size and $w_t \stackrel{\mathrm{iid}}{\sim} \mathcal{N}(0,I_D)$. Since the Jacobian is, $\tfrac{\partial f_{t+1}}{\partial x_t}(x_t) = I_D - \epsilon \nabla^2 \phi(x_t)$, it follows that the identity approximation should work well with small step-sizes $\epsilon$.
More generally, the identity approximation tends to be well-suited to problems where a differential equation is discretized with small step sizes, such as when sampling from diffusion models \citep{flowsanddiffusions2025, lai2025principles}.
In fact, simply by observing the structure of the Jacobian in Panel B of \Cref{fig:2well}, we observe that $\mathrm{diff}(\cdot)$ operator for Newton, quasi-Newton, and Picard iterations in this setting will be close to zero, while $\mathrm{diff}(\mathcal{A}_J)$ will be close to one. Therefore, based on our analysis in \Cref{prop:ConvRates}, we hypothesize that the other fixed-point methods should dramatically outperform Jacobi iterations in this setting.

We test this hypothesis with a simple experiment shown in \Cref{fig:2well}. We simulate Langevin dynamics on a potential $\phi$ given by the negative log probability of the mixture of two anisotropic Gaussians.
In this setting, Picard iterations take far fewer than $T$ iterations to converge and can be faster than sequential evaluation. We note that quasi-Newton iterations, which include information only about the diagonal of the Jacobian of the dynamics, appear to have comparable wall-clock time, by virtue of taking fewer iterations to converge (though each fixed-point iteration involves more work). 

Whether fixed-point iterations are faster than sequential evaluation also depends on memory utilization. For example, \citet{shih2023parallel} and \citet{lu2025parasolver} demonstrated wall-clock speed-ups when using Picard iterations for sampling from a diffusion model using a ``sliding window'' to only evaluate chunks of the sequence length where the parallel scan algorithm can fit in memory. We discuss these considerations further in \Cref{app:implement}.

\section{Related Work}\label{sec:related_work}

In this paper we unify prominent fixed-point methods for the parallel evaluation of sequences in the language of linear dynamical systems. While many papers have employed different fixed-point iterations for different problems in machine learning---\citet{deer2024},  \citet{deeppcr}, and \citet{danieli2025pararnn} using Newton iterations, \citet{tang2024accelerating} and \citet{gonzalez2024scalable} using quasi-Newton iterations, \citet{shih2023parallel} using Picard iterations, and \citet{song2021accelerating} using Jacobi iterations, among other works---to the best of our knowledge no one in the machine learning community has explicitly unified these different methods in the language of linear dynamical systems. It is important to note, however, that \citet{GanderVandewalle2007} propose an expansive notion of parareal iterations which include these LDSs as a special case. However, \citet{GanderVandewalle2007} focus on solving continuous time differential equations and do not discuss the parallel scan that has become a workhorse for the application of parallel-in-time methods on GPUs in machine learning. 

\paragraph{General unification of fixed-point methods: parallel-chord methods}

While connections between Newton's method and Picard iterations have been made before outside of the machine learning literature, our contribution is the tight coupling of these methods to LDSs in the context of parallel evaluation of nonlinear sequences. \citet[Ch. 7]{ortega2000iterative} considered the problem of solving a nonlinear equation $\mathbf{F}(\mathbf{x}) = \mathbf{0}$.  They showed that Newton and Picard iterations are special cases of general iterative methods where each iterate is given by
\begin{equation}\label{eq:or_pchord}
    \mathbf{x}^{(i+1)} = \mathbf{x}^{(i)} - \widetilde{\mathbf{J}}(\mathbf{x}^{(i)})^{-1} \mathbf{F}(\mathbf{x}^{(i)}),
\end{equation}
for some matrix $\widetilde{\mathbf{J}}(\mathbf{x}^{(i)})$. 
We discuss the relationship between the unifying frameworks put forward in \citet{ortega2000iterative} and in our paper at greater length in \Cref{app:parallel_chord}. The primary difference is that by focusing on the setting of nonlinear sequence evaluation, we bring into greater focus the role of the Jacobian of the dynamics function. Moreover, by unifying fixed-point iterations in the language of LDSs, we emphasize their parallelizability over the sequence length using the parallel scan \citep{blelloch1990prefix}.

\paragraph{Convergence rates of fixed-point methods} 
In the context of analysis of fixed-point methods in general, there is a broad literature \citep{ortega2000iterative, young2014iterative} on the convergence rates of different fixed-point methods. For example, \citet{ortega2000iterative} also proved convergence rates for iterative methods of the form in \cref{eq:or_pchord}. Though their methods have much in common with the proof techniques used to prove \Cref{prop:ConvRates} of this paper, their provided results are actually trivial in the setting considered in this paper. Part of the reason\footnote{We elaborate in \Cref{app:parallel_chord}.} for the inapplicability of the convergence results from \citet{ortega2000iterative} to our paper is that \cite{ortega2000iterative} consider the asymptotic setting, while it has been firmly established that in the particular setting considered in this paper, Jacobi, Picard, quasi-Newton, and Newton iterations all globally converge in at most $T$ iterations \citep{Bellen1989, shih2023parallel, tang2024accelerating, gonzalez2024scalable}. For moving beyond this worst-case analysis, \citet{gonzalez2025predictability} uses an optimization perspective to show that the difficulty of parallelizing a dynamical system is directly related to the stability of the system, which can be thought of as the ``average'' spectral norm of $\nicefrac{\partial f_{t+1}}{\partial x_t}$. Proposition 4 of \citet{lu2025parasolver} nicely presents the foundations of the convergence analysis we present in \Cref{prop:ConvRates} of this paper. However, we extend their work by applying it to a wider variety of fixed-point methods, explicitly bounding many quantities of interest, and demonstrating its relevance in simulation.

\paragraph{Other fixed-point methods: mixing sequential with parallel}

In this note, we focus on Jacobi, Picard, and Newton iterations because of their prominence \citep{song2019mintnet, song2021accelerating, shih2023parallel, deeppcr, deer2024, gonzalez2024scalable, grazzi2025parallel, pmcmc, farsang2025scaling, danieli2025pararnn, iacob2025parallel} and their relationship to LDSs, as listed in \Cref{tab:fxd_pt_sum}. However, there is a wide literature on iterative solvers \citep{ortega2000iterative, young2014iterative}. Many of these other methods can also be parallelized over the sequence length, or provide a mixture of parallel and sequential computation.
\revision{For example, \citet{horton1995algorithm} apply the parallel associative scan (which they call parallel cyclic reduction) for parallel-in-time differential equation solvers, though they focus on Jacobi and Gauss-Seidel iterations. In discrete time, \citet{naumov2017parallel} shows how evaluating Markov chains can be cast as a system of nonlinear equations and discusses many techniques from numerical analysis for solving them, again focusing on Jacobi and Gauss-Seidel; \citet{song2021accelerating} extend this program with many deep learning experiments.} Although Gauss-Seidel iterations reduce to sequential evaluation when applied to Markovian processes, \citet{song2021accelerating} also emphasize how the structure of the problem and hardware considerations dictate the optimal mixture of parallel and sequential computation.
Parareal iterations \citep{Lions2001} mix parallel and sequential computation by applying parallelization at multiple length scales, and have also been used to parallelize diffusion models \citep{selvam2024selfrefining}. 
\citet{tang2024accelerating} also parallelize diffusion models using both a generalization of Jacobi iterations, as well as Anderson acceleration \citep{anderson1965iterative, walker2011anderson}, which they modify to be a form of quasi-Newton.
We discuss other fixed-point methods based on trust-region techniques in \Cref{app:other}.

\section{Discussion}
\label{sec:discussion}

This work unified a variety of approaches for parallelizing recursions via fixed-point iterations---including zeroth-order methods like Jacobi and Picard iterations as well as first-order methods like Newton and quasi-Newton iterations---under a common framework.
In each case, the iterates reduce to evaluating an appropriately constructed linear dynamical system, which approximates the nonlinear recursion of interest. 
This unifying framework provides insight into which different problems in machine learning are likely to benefit from which types of fixed-point iterations. In particular, we demonstrate that the structure of the Jacobian matrix of the dynamics function plays a key role in determining which fixed-point method to use.

For this reason, understanding the structure of the Jacobian of the dynamics function is important for using our framework. Fortunately, there are many problems where the structure of the Jacobian matrix is known in advance. As we showed in \Cref{sec:GrpWordProb}, the group word problem can always be simulated with permutation matrices for its dynamics. As we showed in \Cref{sec:WellExp}, discretized roll-outs from differential equations, used in sampling from diffusion models and rolling out neural ODEs, have $\nicefrac{\partial f}{\partial x}$ equal to the identity matrix plus a correction term proportional to the discretization step-size. Moreover, as shown in \citet{pmcmc}, the dynamics of position and momenta variables in Hamiltonian Monte Carlo (HMC) results in banded matrices. Furthermore, in sequence modeling, one can \emph{design} a recurrent neural network to have Jacobians with desired structure \citep{farsang2025scaling, zattra2025context, danieli2025pararnn}. Finally, if there is truly no analytic information about the Jacobian in advance, its structure can be probed with finite-difference methods.

\paragraph{Future directions}

Clarifying the relationships and properties of these approaches through the lens of linear dynamical systems also suggests promising areas for future study. One clear direction of future work is to explore additional approaches for exploiting problem-specific structure, using our unifying framework to develop new fixed-point iterations. For example, an intermediate between Picard and quasi-Newton methods is a scaled identity approximation, $\tilde{A}_{t+1} = a_{t+1} I_D$. If we had prior knowledge on the appropriate scaling factors, $a_{t+1} \in \mathbb{R}$, we could avoid computing any Jacobian-vector product evaluations. More generally, there exist other groups of structured matrices with compact representations that are closed under composition such that a parallel evaluation of the LDS would be computationally efficient. Examples include permutation matrices, block-diagonal matrices, and block matrices where each sub-block is diagonal, among others. Future work should enumerate these use cases and investigate problem-specific applications where they are appropriate. One example application is for more efficient parallelization of the group word problem using a compact representation of permutation matrices, as was done in concurrent work by \citet{terzic2025permutation}. 

Understanding the shared backbone of these fixed-point methods can give practitioners guidance about which methods to use for which problems. As parallel evaluation of seemingly sequential processes becomes increasingly important in machine learning, these insights may provide valuable guidance to the field. 

\newpage

\bibliography{references}

@book{artin2011abstract,
  title     = {Abstract Algebra},
  author    = {Artin, Michael},
  year      = {2011},
  publisher = {Pearson},
  edition   = {2nd},
  isbn      = {9780132413770}
}

@article{besag1994comment,
  title={{Comment on ``{R}epresentations of knowledge in complex systems'' by {G}renander and {M}iller}},
  author={Besag, Julian},
  journal={Journal of the Royal Statistical Society: Series B (Methodological)},
  volume={56},
  number={4},
  pages={549--581},
  year={1994},
  publisher={Wiley Online Library}
}

@techreport{blelloch1990prefix,
  title={{Prefix Sums and Their Applications}},
  author={Blelloch, Guy E.},
  year={1990},
  institution={Carnegie Mellon University, School of Computer Science},
  number={CMU-CS-90-190}
}

@inproceedings{deer2024,
  title={Parallelizing non-linear sequential models over the sequence length},
  author={Lim, Yi Heng and Zhu, Qi and Selfridge, Joshua and Kasim, Muhammad Firmansyah},
  booktitle={International Conference on Learning Representations (ICLR)},
  year={2024}
}

@inproceedings{deeppcr,
title = {{DeepPCR: Parallelizing Sequential Operations in Neural Networks}},
author = {Federico Danieli and Miguel Sarabia and Xavier Suau and Pau Rodríguez and Luca Zappella},
booktitle={Advances in Neural Information Processing Systems (NeurIPS)},
year = {2023},
}

@article{Feng2024,
  author    = {Leo Feng and Frederick Tung and Mohamed Osama Ahmed and Yoshua Bengio and Hossein Hajimirsadeghi},
  title     = {{Were RNNs All We Needed?}},
  journal   = {arXiv},
  year      = {2024},
}

@article{geiping2025scaling,
  title={{Scaling up Test-Time Compute with Latent Reasoning: A Recurrent Depth Approach}},
  author={Geiping, Jonas and McLeish, Sean and Jain, Neel and Kirchenbauer, John and Singh, Siddharth and Bartoldson, Brian R. and Kailkhura, Bhavya and Bhatele, Abhinav and Goldstein, Tom},
  journal={arXiv preprint arXiv:2502.05171},
  year={2025},
}

@inproceedings{gonzalez2024scalable,
  title={{Towards Scalable and Stable Parallelization of Nonlinear RNNs}},
  author={Xavier Gonzalez and Andrew Warrington and Jimmy T. H. Smith and Scott W. Linderman},
  booktitle={Advances in Neural Information Processing Systems (NeurIPS)},
  year={2024},
}

@inproceedings{gu2021combining,
  title={{Combining Recurrent, Convolutional, and Continuous-time Models with Linear State-Space Layers}},
  author={Gu, Albert and Johnson, Isys and Goel, Karan and Saab, Khaled and Dao, Tri and Rudra, Atri and R{\'e}, Christopher},
  booktitle={Advances in Neural Information Processing Systems (NeurIPS)},
  year={2021}
}

@inproceedings{gu2022s4,
  title={{Efficiently Modeling Long Sequences with Structured State Spaces}},
  author={Gu, Albert and Goel, Karan and R\'e, Christopher},
  booktitle={The International Conference on Learning Representations ({ICLR})},
  year={2022}
}

@inproceedings{gru,
    title = {{Learning Phrase Representations using {RNN} Encoder{--}Decoder for Statistical Machine Translation}},
    author = {Cho, Kyunghyun  and
      van Merri{\"e}nboer, Bart  and
      Gulcehre, Caglar  and
      Bahdanau, Dzmitry  and
      Bougares, Fethi  and
      Schwenk, Holger  and
      Bengio, Yoshua},
    booktitle = "Proceedings of the 2014 Conference on Empirical Methods in Natural Language Processing ({EMNLP})",
    year = {2014},
}

@inproceedings{liu2023transformers,
  title     = {Transformers Learn Shortcuts to Automata},
  author    = {Liu, Bingbin and Ash, Jordan T. and Goel, Surbhi and Krishnamurthy, Akshay and Zhang, Cyril},
  booktitle = {Proceedings of the International Conference on Learning Representations (ICLR)},
  year      = {2023},
}

@inproceedings{
mamba,
title={{Mamba: Linear-Time Sequence Modeling with Selective State Spaces}},
author={Albert Gu and Tri Dao},
booktitle={Conference on Language Modeling (COLM)},
year={2024},
}

@inproceedings{mamba2,
  title={Transformers are {SSM}s: Generalized Models and Efficient Algorithms Through Structured State Space Duality},
  author={Dao, Tri and Gu, Albert},
  booktitle={International Conference on Machine Learning (ICML)},
  year={2024}
}

@inproceedings{
merrill2024illusion,
title={{The Illusion of State in State-Space Models}},
author={William Merrill and Jackson Petty and Ashish Sabharwal},
booktitle={International Conference on Machine Learning (ICML)},
year={2024},
}

@book{ortega2000iterative,
  title={{Iterative Solution of Nonlinear Equations in Several Variables}},
  author={Ortega, James M and Rheinboldt, Werner C},
  year={1970},
  publisher={Academic Press},
  address={New York and London},
  note={Republished by SIAM in 2000.}
}

@InProceedings{orvieto-resurrecting,
  title = 	 {{Resurrecting Recurrent Neural Networks for Long Sequences}},
  author =       {Orvieto, Antonio and Smith, Samuel L and Gu, Albert and Fernando, Anushan and Gulcehre, Caglar and Pascanu, Razvan and De, Soham},
  booktitle = 	 {International Conference on Machine Learning (ICML)},
  year = 	 {2023},
}

@inproceedings{beck2024xlstmextendedlongshortterm,
      title={{xLSTM: Extended Long Short-Term Memory}}, 
      author={Maximilian Beck and Korbinian Pöppel and Markus Spanring and Andreas Auer and Oleksandra Prudnikova and Michael Kopp and Günter Klambauer and Johannes Brandstetter and Sepp Hochreiter},
      booktitle = {Advances in Neural Information Processing Systems (NeurIPS)},
      year={2024},
}

@book{NocedalWright,
  title={Numerical Optimization},
  author={Nocedal, Jorge and Wright, Stephen J.},
  edition={2},
  publisher={Springer},
  year={2006},
}

@inproceedings{smith2023s5,
  title={{Simplified State Space Layers for Sequence Modeling}},
  author={Smith, Jimmy T.H. and Warrington, Andrew and Linderman, Scott W.},
  booktitle={International Conference on Learning Representations (ICLR)},
  year={2023},
}

@inproceedings{
martin2018parallelizing,
title={{Parallelizing Linear Recurrent Neural Nets Over Sequence Length}},
author={Eric Martin and Chris Cundy},
booktitle={International Conference on Learning Representations (ICLR)},
year={2018},
}

@misc{flowsanddiffusions2025,
author       = {Peter Holderrieth and Ezra Erives},
title        = {Introduction to Flow Matching and Diffusion Models},
year         = {2025},
note          = {MIT course.}
}

@inproceedings{grazzi2024unlocking,
  title={{Unlocking State-Tracking in Linear RNNs Through Negative Eigenvalues}},
  author={Grazzi, Riccardo and Siems, Julien and Franke, J{\"o}rg KH and Zela, Arber and Hutter, Frank and Pontil, Massimiliano},
  booktitle={International Conference on Learning Representations (ICLR)},
  year={2025}
}

@inproceedings{shih2023parallel,
  author    = {Andy Shih and Suneel Belkhale and Stefano Ermon and Dorsa Sadigh and Nima Anari},
  title     = {{Parallel Sampling of Diffusion Models}},
  booktitle = {Advances in Neural Information Processing Systems (NeurIPS)},
  year      = {2023},
}

@inproceedings{song2019generative,
  title={{Generative Modeling by Estimating Gradients of the Data Distribution}},
  author={Song, Yang and Ermon, Stefano},
  booktitle={Advances in Neural Information Processing Systems (NeurIPS)},
  year={2019},
}

@inproceedings{vaswani2017attention,
  title={{Attention is All You Need}},
  author={Vaswani, Ashish and Shazeer, Noam and Parmar, Niki and Uszkoreit, Jakob and Jones, Llion and Gomez, Aidan N and Kaiser, {\L}ukasz and Polosukhin, Illia},
  booktitle={Advances in Neural Information Processing Systems (NeurIPS)},
  year={2017}
}

@inproceedings{
selvam2024selfrefining,
title={{Self-Refining Diffusion Samplers: Enabling Parallelization via Parareal Iterations}},
author={Nikil Roashan Selvam and Amil Merchant and Stefano Ermon},
booktitle={Advances in Neural Information Processing Systems (NeurIPS)},
year={2024},
}

@misc{Friedman2022Langevin,
  author       = {Friedman, Roy},
  title        = {{A Simplified Overview of Langevin Dynamics}},
  howpublished = {Blog post},
  year         = {2022},
}

@article{hutchinson1989stochastic,
  title={A stochastic estimator of the trace of the influence matrix for {L}aplacian smoothing splines},
  author={Hutchinson, Michael F},
  journal={Communications in Statistics-Simulation and Computation},
  volume={18},
  number={3},
  pages={1059--1076},
  year={1989},
  publisher={Taylor \& Francis}
}

@article{kidger2021equinox,
  author  = {Patrick Kidger and Cristian Garcia},
  title   = {{Equinox: neural networks in {JAX} via callable PyTrees and filtered transformations}},
  journal = {CoRR},
  volume  = {abs/2111.00254},
  year    = {2021},
}

@misc{jax2018github,
  author  = {James Bradbury and Roy Frostig and Peter Hawkins and Matthew James Johnson
             and Chris Leary and Dougal Maclaurin and George Necula and Adam Paszke
             and Jake Vander{P}las and Skye Wanderman-{M}ilne and Qiao Zhang},
  title   = {{JAX}: composable transformations of {P}ython+{N}um{P}y programs},
  year    = {2018}
}

@article{Langevin1908Eng,
  author       = {Langevin, Paul},
  title        = {{On the Theory of Brownian Motion}},
  journal      = {American Journal of Physics},
  volume       = {65},
  number       = {11},
  pages        = {1079--1081},
  year         = {1908},
  note         = {English translation, introduced by D.\,S.~Lemons and translated by A.~Gythiel. Original: C.\,R.\ Acad.\ Sci.\ 146, 530--533 (1908)}
}

@article{paszke2019pytorch,
  title   = {{PyTorch: An Imperative Style, High-Performance Deep Learning Library}},
  author  = {Adam Paszke and Sam Gross and Francisco Massa and Adam Lerer and James Bradbury
             and Gregory Chanan and Trevor Killeen and Zeming Lin and Natalia Gimelshein
             and Luca Antiga and Alban Desmaison and Andreas K{\"o}pf and Edward Z. Yang
             and Zach DeVito and Martin Raison and Alykhan Tejani and Sasank Chilamkurthy
             and Benoit Steiner and Lu Fang and Junjie Bai and Soumith Chintala},
  journal = {Advances in Neural Information Processing Systems (NeurIPS)},
  year    = {2019},
}

@inproceedings{sarnthein2025blog,
  title = {Linear Recurrences Accessible to Everyone},
  author = {Sarnthein, Felix},
  booktitle = {ICLR Blogposts},
  year = {2025},
}

@inproceedings{song2021accelerating,
  author    = {Yang Song and Chenlin Meng and Renjie Liao and Stefano Ermon},
  title     = {{Accelerating Feedforward Computation via Parallel Nonlinear Equation Solving}},
  booktitle = {International Conference on Machine Learning (ICML)},
  year      = {2021}
}

@inproceedings{schone2025implicit,
  author    = {Mark Sch{\"o}ne and Babak Rahmani and Heiner Kremer
               and Fabian Falck and Hitesh Ballani and Jannes Gladrow},
  title     = {{Implicit Language Models are RNNs: Balancing Parallelization and Expressivity}},
  booktitle = {International Conference on Machine Learning (ICML)},
  year      = {2025},
}

@inproceedings{pmcmc,
  title={{Parallelizing MCMC Across the Sequence Length}},
  author={David M. Zoltowski and Skyler Wu and Xavier Gonzalez and Leo Kozachkov and Scott W. Linderman},
  booktitle={Advances in Neural Information Processing Systems (NeurIPS)},
  year={2025},
}

@inproceedings{bai2019deep,
  title={Deep equilibrium models},
  author={Bai, Shaojie and Kolter, J Zico and Koltun, Vladlen},
  booktitle={Advances in Neural Information Processing Systems (NeurIPS)},
  year={2019}
}

@inproceedings{
tang2024accelerating,
title={{Accelerating Parallel Sampling of Diffusion Models}},
author={Zhiwei Tang and Jiasheng Tang and Hao Luo and Fan Wang and Tsung-Hui Chang},
booktitle={International Conference on Machine Learning (ICML)},
year={2024},
}

@inproceedings{dao2022flashattention,
  title={Flash{A}ttention: Fast and Memory-Efficient Exact Attention with {IO}-Awareness},
  author={Dao, Tri and Fu, Daniel Y. and Ermon, Stefano and Rudra, Atri and R{\'e}, Christopher},
  booktitle={Advances in Neural Information Processing Systems (NeurIPS)},
  year={2022}
}

@inproceedings{gonzalez2025predictability,
  title={Predictability Enables Parallelization of Nonlinear State Space Models},
  author={Gonzalez, Xavier and Kozachkov, Leo and Zoltowski, David M. and Clarkson, Kenneth L. and Linderman, Scott W.},
  booktitle={Neural Information Processing Systems (NeurIPS)},
  year={2025},
}

@article{kim2023entity,
  title={Entity tracking in language models},
  author={Kim, Najoung and Schuster, Sebastian},
  journal={arXiv preprint arXiv:2305.02363},
  year={2023}
}

@book{young2014iterative,
  title     = {Iterative Solution of Large Linear Systems},
  author    = {Young, David M.},
  year      = {2014},
  publisher = {Elsevier},
  isbn      = {978-0-12-773050-9},
}

@misc{grazzi2025parallel,
  title         = {{Parallel computations for Metropolis Markov chains with Picard maps}},
  author        = {Grazzi, Sebastiano and Zanella, Giacomo},
  year          = {2025},
  eprint        = {2506.09762},
  archivePrefix = {arXiv},
  primaryClass  = {stat.CO},
}

@article{anderson1965iterative,
  title={Iterative procedures for nonlinear integral equations},
  author={Anderson, Donald G},
  journal={Journal of the ACM (JACM)},
  volume={12},
  number={4},
  pages={547--560},
  year={1965},
  publisher={ACM New York, NY, USA}
}

@book{goodfellow2016deep,
title={Deep learning},
author={Goodfellow, Ian and Bengio, Yoshua and Courville, Aaron},
volume={1},
year={2016},
publisher={MIT Press}
}

@inproceedings{sohl2015deep,
  title={Deep unsupervised learning using nonequilibrium thermodynamics},
  author={Sohl-Dickstein, Jascha and Weiss, Eric and Maheswaranathan, Niru and Ganguli, Surya},
  booktitle={International Conference on Machine Learning (ICML)},
  year={2015},
}

@inproceedings{song2021score,
  title={{Score-Based Generative Modeling through Stochastic Differential Equations}},
  author={Song, Yang and Sohl-Dickstein, Jascha and Kingma, Diederik P and Kumar, Abhishek and Ermon, Stefano and Poole, Ben},
  year={2021},
  booktitle={International Conference on Learning Representations (ICLR)}
}

@inproceedings{ho2020denoising,
  title={Denoising diffusion probabilistic models},
  author={Ho, Jonathan and Jain, Ajay and Abbeel, Pieter},
  booktitle={Advances in Neural Information Processing Systems (NeurIPS)},
  year={2020}
}

@inproceedings{he2016deep,
  title={Deep residual learning for image recognition},
  author={He, Kaiming and Zhang, Xiangyu and Ren, Shaoqing and Sun, Jian},
  booktitle={Proceedings of the IEEE conference on computer vision and pattern recognition},
  pages={770--778},
  year={2016}
}

@article{Stone1973,
  author    = {Harold S. Stone},
  title     = {An Efficient Parallel Algorithm for the Solution of a Tridiagonal Linear System of Equations},
  journal   = {Journal of the ACM},
  volume    = {20},
  number    = {1},
  pages     = {27--38},
  year      = {1973},
}

@misc{cs149-lecture8-dataparallel-2024,
  author       = {Fatahalian, Kayvon and Olukotun, Kunle},
  title        = {Lecture 8: Data-Parallel Thinking},
  year         = {2024},
  howpublished = {Lecture slides, CS149: Parallel Computing, Stanford University},
}

@inproceedings{farsang2025scaling,
  title={Scaling Up Liquid-Resistance Liquid-Capacitance Networks for Efficient Sequence Modeling},
  author={Farsang, M{\'o}nika and Grosu, Radu},
  booktitle={Advances in Neural Information Processing Systems (NeurIPS)},
  year={2025}
}

@article{ladner1980parallel,
  title={Parallel prefix computation},
  author={Ladner, Richard E and Fischer, Michael J},
  journal={Journal of the ACM (JACM)},
  volume={27},
  number={4},
  pages={831--838},
  year={1980},
  publisher={ACM New York, NY, USA}
}

@book{lakshmivarahan1994parallel,
  title={Parallel computing using the prefix problem},
  author={Lakshmivarahan, Sivaramakrishnan and Dhall, Sudarshan K},
  year={1994},
  publisher={Oxford University Press}
}

@book{murphy2023probabilistic,
  title={Probabilistic machine learning: Advanced topics},
  author={Murphy, Kevin P},
  year={2023},
  publisher={MIT Press}
}

@book{sarkka2023bayesian,
  title={Bayesian filtering and smoothing},
  author={S{\"a}rkk{\"a}, Simo and Svensson, Lennart},
  volume={17},
  year={2023},
  publisher={Cambridge University Press}
}

@article{kalman-filter,
  title={A new approach to linear filtering and prediction problems},
  author={Kalman, R. E.},
  journal={Journal of Basic Engineering},
  volume={82},
  number={1},
  pages={35--45},
  year={1960},
  publisher={American Society of Mechanical Engineers}
}

@article{rauch1965maximum,
  title={Maximum likelihood estimates of linear dynamic systems},
  author={Rauch, H. E. and Tung, F. and Striebel, C. T.},
  journal={AIAA Journal},
  volume={3},
  number={8},
  pages={1445--1450},
  year={1965},
  publisher={American Institute of Aeronautics and Astronautics}
}

@article{parallel-kalman,
  title={Temporal Parallelization of Bayesian Smoothers},
  author={S{\"a}rkk{\"a}, Simo and Garc{\'i}a-Fern{\'a}ndez, {\'A}ngel F.},
  journal={IEEE Transactions on Automatic Control},
  volume={66},
  number={1},
  pages={299--306},
  year={2021},
  publisher={IEEE},
  doi={10.1109/TAC.2020.2976316}
}

@inproceedings{zhao2023revisiting,
  title={Revisiting structured variational autoencoders},
  author={Zhao, Yixiu and Linderman, Scott},
  booktitle={International Conference on Machine Learning (ICML)},
  year={2023},
}

@inproceedings{johnson2016composing,
  title={Composing graphical models with neural networks for structured representations and fast inference},
  author={Johnson, Matthew J and Duvenaud, David K and Wiltschko, Alex and Adams, Ryan P and Datta, Sandeep R},
  booktitle={Advances in Neural Information Processing Systems},
  year={2016}
}

@article{levenberg1944method,
  title={A method for the solution of certain non-linear problems in least squares},
  author={Levenberg, Kenneth},
  journal={Quarterly of Applied Mathematics},
  volume={2},
  pages={164--168},
  year={1944}
}

@article{marquardt1963algorithm,
  title={An algorithm for least-squares estimation of nonlinear parameters},
  author={Marquardt, Donald W.},
  journal={Journal of the Society for Industrial and Applied Mathematics},
  volume={11},
  number={2},
  pages={431--441},
  year={1963},
  publisher={SIAM}
}

@inproceedings{Sarkka-lm,
  author    = {Simo Särkkä and Lennart Svensson},
  title     = {Levenberg-{M}arquardt and Line-Search Extended {K}alman Smoothers},
  booktitle = {ICASSP 2020 - 2020 IEEE International Conference on Acoustics, Speech and Signal Processing (ICASSP)},
  year      = {2020},
  pages     = {5875--5879},
  publisher = {IEEE},
}

@incollection{harris2007scan,
  title        = {Parallel Prefix Sum (Scan) with {CUDA}},
  author       = {Harris, Mark and Sengupta, Shubhabrata and Owens, John D.},
  booktitle    = {GPU Gems 3},
  editor       = {Nguyen, Hubert},
  publisher    = {Addison-Wesley Professional},
  address      = {Upper Saddle River, NJ},
  year         = {2007},
  month        = aug,
  chapter      = {39},
  pages        = {851--876},
}

@inproceedings{hu2025sing,
  title={{SING: SDE Inference via Natural Gradients}},
  author={Hu, Amber and Smith, Henry and Linderman, Scott},
  booktitle={Advances in Neural Information Processing Systems (NeurIPS)},
  year={2025}
}

@article{hassan2021temporal,
  title={Temporal parallelization of inference in hidden Markov models},
  author={Hassan, Syeda Sakira and S{\"a}rkk{\"a}, Simo and Garc{\'\i}a-Fern{\'a}ndez, {\'A}ngel F},
  journal={IEEE Transactions on Signal Processing},
  volume={69},
  pages={4875--4887},
  year={2021},
  publisher={IEEE}
}

@article{linderman2025dynamax,
  title={{Dynamax: A Python package for probabilistic state space modeling with JAX}},
  author={Linderman, Scott W and Chang, Peter and Harper-Donnelly, Giles and Kara, Aleyna and Li, Xinglong and Duran-Martin, Gerardo and Murphy, Kevin},
  journal={Journal of Open Source Software},
  volume={10},
  number={108},
  pages={7069},
  year={2025}
}

@inproceedings{
gla,
title={{Gated Linear Attention Transformers with Hardware-Efficient Training}},
author={Yang, Songlin and Wang, Bailin and Shen, Yikang and Panda, Rameswar and Kim, Yoon},
booktitle={International Conference on Machine Learning (ICML)},
year={2024},
}

@misc{heinsen2023parallelization,
      title={Efficient Parallelization of a Ubiquitous Sequential Computation}, 
      author={Franz A. Heinsen},
      year={2023},
      eprint={2311.06281},
      archivePrefix={arXiv},
      primaryClass={cs.DS}
}

@misc{proger,
  author       = {Volodymyr Kyrylov},
  title        = {Accelerated Scan},
  year         = {2024},
  note         = {GitHub repository},
url = {https://github.com/proger/accelerated-scan}
}

@article{yaghoobi2025parallel,
  title={Parallel square-root statistical linear regression for inference in nonlinear state space models},
  author={Yaghoobi, Fatemeh and Corenflos, Adrien and Hassan, Sakira and S{\"a}rkk{\"a}, Simo},
  journal={SIAM Journal on Scientific Computing},
  volume={47},
  number={2},
  pages={B454--B476},
  year={2025},
  publisher={SIAM}
}

@article{Gasilov1981ParallelChord,
  author  = {Gasilov, V. A. and Tishkin, V. F. and Favorskii, A. P. and Shashkov, M. Yu.},
  title   = {The use of the parallel-chord method to solve hydrodynamic difference equations},
  journal = {U.S.S.R. Computational Mathematics and Mathematical Physics},
  volume  = {21},
  number  = {3},
  pages   = {178--192},
  year    = {1981},
  issn    = {0041-5553},
  doi     = {10.1016/0041-5553(81)90075-6},
  publisher = {Pergamon Press}
}

@misc{Kapfer2025Marlowe,
  author       = {Kapfer, Craig and Stine, Kurt and Narasimhan, Balasubramanian
                  and Mentzel, Christopher and Cand{\`e}s, Emmanuel},
  title        = {{Marlowe: Stanford's GPU-based Computational Instrument}},
  year         = {2025},
  howpublished = {Zenodo},
  doi          = {10.5281/zenodo.14751899},
  note         = {Version 0.1}
}

@article{hillis1986data,
  title={Data parallel algorithms},
  author={Hillis, W Daniel and Steele Jr, Guy L},
  journal={Communications of the ACM},
  volume={29},
  number={12},
  pages={1170--1183},
  year={1986},
  publisher={ACM New York, NY, USA}
}

@article{walker2011anderson,
  title={Anderson acceleration for fixed-point iterations},
  author={Walker, Homer F and Ni, Peng},
  journal={SIAM Journal on Numerical Analysis},
  volume={49},
  number={4},
  pages={1715--1735},
  year={2011},
  publisher={SIAM}
}

@inproceedings{danieli2025pararnn,
  title={{ParaRNN}: Unlocking Parallel Training of Nonlinear {RNNs} for Large Language Models},
  author={Danieli, Federico and Rodriguez, Pau and Sarabia, Miguel and Suau, Xavier and Zappella, Luca},
  booktitle={International Conference on Learning Representations (ICLR)},
  year={2026}
}

@book{Nguyen:2007:GP3,
  editor    = {Hubert Nguyen},
  title     = {{GPU} Gems 3},
  publisher = {Addison-Wesley Professional},
  year      = {2007},
  month     = aug,
  isbn      = {978-0321515261}
}

@article{fang2009multisecant,
  title={Two classes of multisecant methods for nonlinear acceleration},
  author={Fang, Haw-ren and Saad, Yousef},
  journal={Numerical linear algebra with applications},
  volume={16},
  number={3},
  pages={197--221},
  year={2009},
  publisher={Wiley Online Library}
}

@article{iacob2025parallel,
  title={A parallel-in-time Newton's method-based ODE solver},
  author={Iacob, Casian and Razavi, Hassan and S{\"a}rkk{\"a}, Simo},
  journal={arXiv preprint arXiv:2511.01465},
  year={2025}
}

@inproceedings{terzic2025permutation,
  title={Structure Sparse Transition Matrices to Enable State Tracking in State-Space Models},
  author={Aleksandar Terzi\'c and Nicolas Menet and Michael Hersche and Thomas Hoffman and Abbas Rahimi},
  booktitle={Advances in Neural Information Processing Systems (NeurIPS)},
  year={2025}
}

@inproceedings{lu2025parasolver,
    author = {Lu, Jianrong and Zhu, Zhiyu and Hou, Junhui},
    title = {ParaSolver: A Hierarchical Parallel Integral Solver for Diffusion Models},
    booktitle = {International Conference on Learning Representations (ICLR)},
    year = {2025}
}

@article{daFonseca2007,
  title = {On the eigenvalues of some tridiagonal matrices},
  author = {da Fonseca, C.M.},
  journal = {Journal of Computational and Applied Mathematics},
  volume = {200},
  number = {1},
  pages = {283--286},
  year = {2007},
}

@article{broyden1970convergence,
  title={The convergence of a class of double-rank minimization algorithms},
  author={Broyden, C.G.},
  journal={IMA Journal of Applied Mathematics},
  volume={6},
  number={1},
  pages={76--90},
  year={1970},
  publisher={Oxford University Press}
}

@book{dennis1996numerical,
  title={Numerical methods for unconstrained optimization and nonlinear equations},
  author={Dennis Jr, John E and Schnabel, Robert B},
  year={1996},
  publisher={SIAM}
}

@article{song2019mintnet,
  title={Mintnet: Building invertible neural networks with masked convolutions},
  author={Song, Yang and Meng, Chenlin and Ermon, Stefano},
  journal={Advances in Neural Information Processing Systems (NeurIPS)},
  year={2019}
}

@article{naumov2017parallel,
  title={Parallel complexity of forward and backward propagation},
  author={Naumov, Maxim},
  journal={arXiv preprint arXiv:1712.06577},
  year={2017}
}

@book{kelley1995iterative,
  title={Iterative methods for linear and nonlinear equations},
  author={Kelley, Carl T},
  year={1995},
  publisher={SIAM}
}

@article{GanderVandewalle2007,
  author  = {Gander, Martin J. and Vandewalle, Stefan},
  title   = {Analysis of the parareal time-parallel time-integration method},
  journal = {SIAM Journal on Scientific Computing},
  year    = {2007},
  volume  = {29},
  number  = {2},
  pages   = {556--578},
}

@article{Bellen1989,
  title     = {Parallel algorithms for initial-value problems for difference and differential equations},
  author    = {Bellen, Alfredo and Zennaro, Marino},
  journal   = {Journal of Computational and Applied Mathematics},
  volume    = {25},
  number    = {3},
  pages     = {341--350},
  year      = {1989},
  publisher = {North-Holland},
}

@article{horton1995algorithm,
  title={An algorithm with polylog parallel complexity for solving parabolic partial differential equations},
  author={Horton, Graham and Vandewalle, Stefan and Worley, P},
  journal={SIAM Journal on Scientific Computing},
  volume={16},
  number={3},
  pages={531--541},
  year={1995},
  publisher={SIAM}
}

@article{Lions2001,
  title     = {A ``parareal'' in time discretization of {PDE}'s},
  author    = {Lions, Jacques-Louis and Maday, Yvon and Turinici, Gabriel},
  journal   = {Comptes Rendus de l'Acad{\'e}mie des Sciences - Series I - Mathematics},
  volume    = {332},
  number    = {7},
  pages     = {661--668},
  year      = {2001},
  publisher = {Elsevier},
}

@article{zattra2025context,
  title={Context-Selective State Space Models: Feedback is All You Need},
  author={Zattra, Riccardo and Baggio, Giacomo and Casti, Umberto and Ferrante, Augusto and Ticozzi, Francesco},
  journal={arXiv preprint arXiv:2510.14027},
  year={2025}
}

@article{lai2025principles,
  title={The principles of diffusion models},
  author={Lai, Chieh-Hsin and Song, Yang and Kim, Dongjun and Mitsufuji, Yuki and Ermon, Stefano},
  journal={arXiv preprint arXiv:2510.21890},
  year={2025}
}
\bibliographystyle{tmlr}

\clearpage
\appendix
\crefalias{section}{appendix}
\crefalias{subsection}{appendix}

\newpage

\section{Author Contributions and Acknowledgments }\label{app:contributions}

\paragraph{Author Contributions} XG, EKB, and SWL conceived the project. XG and SWL proved \Cref{prop:picard} and \Cref{prop:jacobi}. XG provided most of the content of \Cref{sec:conv_rates}, with input from all authors. XG and JWL proved \Cref{prop:cayley}. HDL developed, ran, and wrote up the state tracking experiments. XG, EKB, HDL, JWL, and KAW all wrote substantial code in various exploratory aspects of the project. DZ and CR contributed important insights about the relationships between algorithms and computational efficiency. 
LK contributed important insights about rates of convergence and the importance of stability.
All authors made important contributions to conceptualization, brainstorming, exploration, writing, and editing. 
XG, EKB, and SWL wrote the manuscript with input from all authors. 
SWL supervised the project.

\paragraph{Acknowledgments} We thank Nicolas Zucchet, Francois Chaubard, Libby Zhang, Henry Smith, and Ian Christopher Tanoh, as well as the anonymous reviewers on Open Review, for helpful feedback.
X.G. acknowledges support from the Walter Byers Graduate Scholarship from the NCAA.
J.W.L. was supported by the Department of Energy Computational Science Graduate Fellowship under Award
Number DE-SC0023112.
S.W.L. was supported by
fellowships from the Simons Collaboration on the Global Brain, the Alfred P. Sloan Foundation, and the
McKnight Foundation.
We thank Stanford University and the Stanford Research Computing Center for providing computational
resources and support. Additional computations were performed on Marlowe \citep{Kapfer2025Marlowe},
Stanford University’s GPU-based Computational Instrument, supported by Stanford Data Science and
Stanford Research Computing.
The authors have no competing interests. 

\section{The Parallel Scan: A Gentle Introduction}\label{app:parallel scan}

\subsection{A very simple example: multiplying matrices}

The \emph{parallel scan}~\citep{Stone1973, blelloch1990prefix}, also known as the \textit{associative} scan and, colloquially, \textit{pscan}, is a well-known primitive in the parallel computing literature \citep{hillis1986data, ladner1980parallel, lakshmivarahan1994parallel}. The core idea of the parallel scan is a divide-and-conquer algorithm. We illustrate this point in the simple example of multiplying a series of matrices together.

\paragraph{Simple Example (Multiplying Matrices):} Given a series of square matrices $A_1, A_2, \hdots, A_{T-1}, A_T$, compute their product\footnote{Note that we have the matrices act via left-multiplication over the sequence length, because this is the most common way to write matrix-vector products.}, $A_T A_{T-1} \hdots A_2 A_1$. The simplest way to carry out the matrix multiplication is sequentially: first compute $A_1$, then compute $A_2 A_1$, then compute $A_3 A_2 A_1$, and so on. Such an approach takes $\mathcal{O}(T)$ time.

A core insight of the parallel scan is that matrix multiplication is \emph{closed}; that is, if $A_s \in \mathbb{R}^{D \times D}$ and $A_t \in \mathbb{R}^{D \times D}$, then $A_t A_s \in \mathbb{R}^{D \times D}$. Thus, matrix products can be computed recursively in pairs, as illustrated in \Cref{fig:divide_and_conquer}.

\begin{figure}[ht]
\centering
\begin{tikzpicture}[every node/.style={font=\small}, node distance=1.2cm and 1.2cm]

\tikzset{
  block/.style = {rectangle, draw, minimum width=2cm, minimum height=0.8cm, fill=green!30},
  arrow/.style = {thick, ->, >=Stealth} 
}

\node[block] (block1) {$A_1$};
\node[block, right=1.2cm of block1] (block2) {$A_2$};
\node[block, right=1.2cm of block2] (block3) {$A_3$};
\node[block, right=1.2cm of block3] (block4) {$A_4$};

\coordinate (mid12) at ($(block1)!0.5!(block2)$);
\coordinate (mid34) at ($(block3)!0.5!(block4)$);
\coordinate (mid56) at ($(mid12)!0.5!(mid34)$);

\node[block, below=0.9cm of mid12] (block5) {$A_2 A_1$};
\node[block, below=0.9cm of mid34] (block6) {$A_4 A_3$};

\node[block, below=2.5cm of mid56] (block7) {$A_4 A_3 A_2 A_1$};

\draw[arrow] (block1.south) -- ($(block1.south)!0.5!(block5.north)$);
\draw[arrow] (block2.south) -- ($(block2.south)!0.5!(block5.north)$);
\draw[arrow] (block3.south) -- ($(block3.south)!0.5!(block6.north)$);
\draw[arrow] (block4.south) -- ($(block4.south)!0.5!(block6.north)$);

\draw[arrow] (block5.south) -- ($(block5.south)!0.9!(block7.north)$);
\draw[arrow] (block6.south) -- ($(block6.south)!0.9!(block7.north)$);

\end{tikzpicture}
\caption{\textbf{Parallel Scan for Matrix Multiplication.} We illustrate a divide-and-conquer approach to compute the product $A_4 A_3 A_2 A_1$. Note that this divide-and-conquer approach naturally leads to $\mathcal{O}(\log T)$ depth. }\label{fig:divide_and_conquer}
\end{figure}

Because of the divide-and-conquer (binary-tree-like) nature of this approach to multiplying matrices, with $\mathcal{O}(T)$ processors, the time needed to get the matrix product is only $\mathcal{O}(\log T)$. 
This simple example illustrates the core intuition behind the parallel scan: a closed operation leading to a divide-and-conquer approach that parallelizes a computation so that it takes sublinear time.
However, there are two additional details of the parallel associative scan that we should address: arbitrary binary associative operators and closure; and getting intermediate products.

\subsection{Detail \#1: Parallel scans for arbitrary binary associative operators}

Matrix multiplication is an associative operator, as $A_3 (A_2 A_1) = (A_3 A_2) A_1$. In general, consider a binary associative operator $\otimes$, which would satisfy $q_3 \otimes (q_2 \otimes q_1) = (q_3 \otimes q_2) \otimes q_1$. Now, let us further assume that this binary associative operator is closed:
\begin{definition}[Closure]\label{def:closure}
	A binary associative operator $\otimes$ is closed over a set $\mathcal{S}$ if it satisfies the property:
	\begin{equation}\label{eq:closure_def}
		q_1 \in \mathcal{S}, q_2 \in \mathcal{S} \Rightarrow q_2 \otimes q_1 \in \mathcal{S}.
	\end{equation}
\end{definition}
If $\otimes$ is closed, then we can again use a parallel scan to compute the cumulative product of the operands.
A wide range of binary associative operators are closed, and can thus be parallelized with the parallel scan. Some examples include:
\begin{itemize}
    \item \textbf{Scalar addition:} The fact that addition of scalars (and vectors) is closed allows cumulative sums to be computed with the parallel scan algorithm. In this instance, it is also known as the \textit{prefix sum} algorithm. Clearly, addition is associative and closed, and so summing a series of scalars can be done with a divide-and-conquer approach.

    \item \textbf{Composition of affine functions, as in LDSs.} Consider the affine function $f_i(x) = A_i x + b_i$. Notice that the composition of affine functions is also affine, as $f_{j} (f_i(x)) = A_j A_i x + \left( b_j + A_j b_i  \right)$. Thus, if we represent the operands as ordered pairs $(A_i, b_i)$ and $(A_j, b_j)$, we can write the associative operator $\otimes$ for the composition of affine functions as
    \begin{equation*}
    	(A_i, b_i) \otimes (A_j, b_j) = (A_j A_i, b_j + A_j b_i).
    \end{equation*}
    Thus, we observe that in this setting, $\otimes$ is closed. We also should check that $\otimes$ is associative: we can do so with either elementary algebra, or by observing that function composition is associative.
    
    This observation that composition of affine functions can be parallelized with the associative scan is what lets us parallelize LDSs. The parallelization of LDSs is what allows for parallel computation in many important architectures in sequence modeling, such as linear RNNs \citep{martin2018parallelizing, orvieto-resurrecting}, deep SSMs \citep{smith2023s5, mamba}, and nonlinear\footnote{The parallel scan is used in nonlinear RNNs via the iterative fixed-point methods discussed in this paper, i.e. Newton and quasi-Newton iterations, see \Cref{alg:fxd_pt}.} RNNs \citep{deer2024, gonzalez2024scalable, farsang2025scaling, danieli2025pararnn}. This parallel scan for LDSs is the core primitive uniting the fixed-point methods discussed in this paper.

    \item \textbf{Kalman filtering and smoothing:} Parallel scans can also be utilized in probabilistic modeling. A standard probabilistic model is the \emph{linear Gaussian state space model} (LGSSM), where the latent variables $x_t$ follow linear dynamics with Gaussian noise, and emit observations $y_t$ with linear readouts with Gaussian noise \citep{murphy2023probabilistic, sarkka2023bayesian}. See \Cref{fig:lgssm}.

    \begin{figure}[t]
        \centering
        \begin{tikzpicture}
            \node[circle, draw=black] (x0) at (0,0) {$x_0$};
            \node[circle, draw=black] (x1) at (2,0) {$x_1$};
            \node[circle, draw=black, fill = gray] (y1) at (2,-2) {$y_1$};
            \node[circle, draw=black] (x2) at (4,0) {$x_2$};
            \node[circle, draw=black, fill = gray] (y2) at (4,-2) {$y_2$};
            \node[circle, draw=black] (x3) at (6,0) {$x_3$};
            \node[circle, draw=black, fill = gray] (y3) at (6,-2) {$y_3$};
            
            \draw[-to] (x0) -- (x1);
            \draw[-to] (x1) -- (y1);
            \draw[-to] (x1) -- (x2);
            \draw[-to] (x2) -- (y2);
            \draw[-to] (x2) -- (x3);
            \draw[-to] (x3) -- (y3);
        \end{tikzpicture}
    
        \caption{\textbf{A linear Gaussian state space model (LGSSM):} The LGSSM consists of latent variables $x_t$ and observed variables $y_t$. The generative model of the LGSSM consists of dynamics $x_{t+1} \sim \mathcal{N}\left( A x_t, Q \right)$ and emissions $y_{t+1} \sim \mathcal{N}\left( H x_{t+1}, R \right) $.}
        \label{fig:lgssm}
    \end{figure}
    
    Two canonical inferential targets in the LGSSM are the filtering distributions, $p(x_t \mid y_{1:t})$, and the smoothing distributions, $p(x_t \mid y_{1:T})$.
    The Kalman filter \citep{kalman-filter} and Rauch-Tung-Striebel (RTS) smoother \citep{rauch1965maximum} obtain the filtering and smoothing distributions (respectively) in an LGSSM. The Kalman filter makes a single pass forward in time to get the filtering distributions, while the RTS smoother then makes an additional pass backwards in time to get the smoothing distributions. Both the Kalman filter and RTS smoother would seem to be inherently sequential algorithms, requiring $\mathcal{O}(T)$ time. However, \citet{parallel-kalman} demonstrated that the Kalman filter and RTS smoother can also be parallelized over the sequence length via the construction of custom binary associative operators and a parallel scan. While we leave the details of this construction to \citet{parallel-kalman}, we note that it is intuitively plausible to be able to parallelize filtering and smoothing in an LGSSM with a parallel scan because
    \begin{itemize}
        \item the dynamical backbone is an LDS, for which we have a parallel scan;
        \item since everything is linear and Gaussian, all distributions remain Gaussian, hinting at closure; and
        \item we can combine $p(x_s | x_0, y_{1:s})$ with $p(x_t | x_s, y_{s+1: t})$ to obtain $p(x_t | x_0, y_{1:t})$, suggesting a divide-and-conquer strategy.
    \end{itemize}
    These parallel filtering and smoothing algorithms are useful in machine learning, allowing for parallelization of structured variational autoencoders \citep{johnson2016composing, zhao2023revisiting}. Similar approaches also work for Hidden Markov Models \citep{hassan2021temporal} and for computing log-normalizing constants \citep{hu2025sing}. 
    
    \item \textbf{Parallelizing fixed point iterations:} Finally, these parallel filtering and smoothing are directly applicable to the types of parallel fixed-point iterations that are the focus of this paper. In particular, the ELK algorithm \citep{gonzalez2024scalable}---\textbf{E}valuating \textbf{L}evenberg-Marquardt via \textbf{K}alman---stabilizes the Newton iterations \citep{deer2024} we discuss in this paper using the Levenberg-Marquardt (trust-region) method \citep{levenberg1944method, marquardt1963algorithm}. The trust-region updates are able to be computed using a parallel scan because they are equivalent to a Kalman smoother in an appropriately constructed LGSSM \citep{Sarkka-lm}. 

\end{itemize}

\paragraph{What about arbitrary function composition?} The astute reader might note that the composition of functions, i.e. $f_1 \circ f_2$, is \emph{always} a binary associative operator. So, why do we have all these special cases of parallel scans, and not simply one parallel scan for the composition $\circ$ of arbitrary functions $f_i$? 

The reason to have many different parallel scans is precisely the importance of having the binary associative operator be \emph{closed}. In all the previous examples, the binary associative operator $\otimes$ satisfies \Cref{def:closure}, letting us easily store combinations of operands $q_i \otimes q_j$ and so employ a divide-and-conquer technique.

While we could consider some gigantic function space $\mathcal{F}$, for which function composition would be closed, the practical question then becomes: how would we store the combinations of operands? If we do not have some compact representation for elements of $\mathcal{F}$, then we cannot use a parallel scan in practice, even though the parallel scan may seem applicable in theory.

\subsection{Detail \#2: Obtaining the intermediate terms in the product}

When we evaluate a linear dynamical system, we often do not want only the final term $x_T$, but also all the intermediate terms $\mathbf{x}_{1:T}$, i.e. the full trajectory. So far, the parallel scan as presented would only yield the final term $x_T$ as well as intermediate terms that happened to be powers of $2$, i.e. $x_1, x_2, x_4, x_8$, etc.

However, the parallel scan can be easily adjusted to obtain all the intermediate terms as well. Let us return to our very simple example of matrix multiplication to illustrate, in particular the setting where $T=8$. We will denote the individual matrices as $A_1, A_2, A_3, \hdots A_8$, and their products as $A_{s:t}$, i.e. $A_{5:6} = A_6 A_5$. 

The first phase of the parallel scan is the \emph{up-sweep}, and takes $\log(T)$ iterations and $\mathcal{O}(T)$ memory. We start multiplying adjacent pairs of matrices together, going, for example\footnote{See Position 8 of \Cref{tab:upsweep}}, from $A_8$ to $A_{7:8}$ to $A_{5:8}$ to $A_{1:8}$. 

Then, in the \emph{down-sweep}, we fill in the missing products to obtain all the cumulative products $A_{1:t}$ for $1 \leq t \leq T$. Intuitively, the down-sweep also takes $\mathcal{O}(\log T)$ iterations, for the same reason that any natural number $T$ can be represented using $1 + \log_2(T)$ digits in binary.  

\begin{table}[ht]
\centering
\caption{\textbf{Up-sweep} for multiplying $A_1, A_2, \hdots A_8$.}
\label{tab:upsweep}
\begin{tabular}{*{8}{>{$}c<{$}}}
\toprule
\text{Position 1} & \text{Position 2} & \text{Position 3} & \text{Position 4} &
\text{Position 5} & \text{Position 6} & \text{Position 7} & \text{Position 8} \\
\midrule
{ \color{gray} A_1} & { \color{gray} A_2 }  & { \color{gray} A_3 } & { \color{gray} A_4 } & { \color{gray} A_5} & { \color{gray} A_6 }  & { \color{gray} A_7 } & { \color{gray} A_8 } \\
A_1 & { \color{blue} A_{1:2}} & A_3 & { \color{blue} A_{3:4}} & A_5 & { \color{blue} A_{5:6} } & A_7 & { \color{blue} A_{7:8} } \\
A_1 & A_{1:2} & A_3 & { \color{blue} A_{1:4} } & A_5 & A_{5:6} & A_7 & { \color{blue} A_{5:8} }  \\
A_1 & A_{1:2} & A_3 & A_{1:4} & A_5 & A_{5:6} & A_7 & { \color{blue} A_{1:8} }  \\
\bottomrule
\end{tabular}
\end{table}

\begin{table}[ht]
\centering
\caption{\textbf{Down-sweep} for multiplying $A_1, A_2, \hdots A_8$.}
\label{tab:downsweep}
\begin{tabular}{*{8}{>{$}c<{$}}}
\toprule
\text{Position 1} & \text{Position 2} & \text{Position 3} & \text{Position 4} &
\text{Position 5} & \text{Position 6} & \text{Position 7} & \text{Position 8} \\
\midrule
A_1 & A_{1:2} & A_3 & A_{1:4} & A_5 & { \color{blue} A_{1:6} } & A_7 &  A_{1:8}  \\
A_1 & A_{1:2} & { \color{blue} A_{1:3} }  & A_{1:4} & { \color{blue} A_{1:5} } & A_{1:6}  & { \color{blue} A_{1:7} } &  A_{1:8}  \\
\bottomrule
\end{tabular}
\end{table}

Thus, together, the up-sweep and the down-sweep of the parallel scan run in $\mathcal{O}(\log T)$ time on $\mathcal{O}(T)$ processors, and at the end of this algorithm, we get all of the intermediate products\footnote{See the last row of \Cref{tab:downsweep}.} (the ``prefix sums''). 

\subsection{Implementation considerations}

This gentle introduction provides the main ideas and intuition of the parallel scan: closed binary associative operators leading a divide-and-conquer algorithm to leverage parallel processors to compute sequential compositions in sublinear time. However, there are also a host of implementation details for using the parallel scan when programming on accelerated hardware like GPUs \citep{harris2007scan,gla,sarnthein2025blog}. For example, the presence of a general-purpose parallel scan is, as of the time of writing, a major difference between JAX \citep{jax2018github} and PyTorch \citep{paszke2019pytorch}, two leading Python libraries for deep learning. JAX has a general purpose parallel scan (\texttt{jax.lax.associative\_scan}) as a fundamental primitive, which allows for implementation of a wide range of parallel scans. For example, \texttt{dynamax}, a JAX library for probabilistic state space modeling \citep{linderman2025dynamax}, implements the parallel filtering and smoothing algorithms from \citet{parallel-kalman}. In contrast, PyTorch currently has only \texttt{torch.cumsum}, which is the parallel scan where the binary associative operator is addition\footnote{Although \citet{heinsen2023parallelization} shows that clever uses of \texttt{torch.cumsum} can parallelize scalar/diagonal LDSs, of the type that are used in quasi-Newton iterations (\cref{eq:qdeer}).}, and \texttt{torch.cumprod} (for scalar multiplication). This difference is why we implement the experiments in this paper in JAX. This lack of a general purpose parallel scan in PyTorch has also led to the custom development of highly-optimized, hardware-aware custom CUDA kernels for parallel scans. These custom parallel scans appear most prominently in Mamba \citep{mamba}, a leading SSM for language modeling, and ParaRNN \citep{danieli2025pararnn}, which applies the Newton iterations discussed in this paper to 7B parameter nonlinear RNNs to achieve strong language modeling performance. There also exist useful implementations of parallel scans for scalar/diagonal LDSs in PyTorch such as \cite{proger}, which we used to implement quasi-Newton iterations in PyTorch in this repo: \url{https://github.com/lindermanlab/elk-torch}

\section{Further Discussion of Convergence Analysis}\label{app:ConvAnal}

In this appendix, we provide a deeper discussion of the details, intuitions, and limitations of the convergence analysis presented in \Cref{sec:conv_rates}.

\subsection{Formula for approximate Jacobian}

For completeness, we provide the formula for the block-bidiagonal approximate Jacobian $\widetilde{\mathbf{J}}(\mathbf{x}_{1:T}) \in \mathbb{R}^{TD \times TD}$ that the fixed-point methods considered in this paper give rise to.
\begin{equation}\label{eq:bold_A}
    \widetilde{\mathbf{J}}(\mathbf{x}_{1:T}) := \begin{pmatrix}
        I_D & 0 & 0 &  \hdots & 0 & 0\\
        -\tilde{A}_2(x_1) & I_D & 0 & \hdots & 0 & 0\\
        0 & -\tilde{A}_3(x_2) & I_D & \hdots & 0 & 0\\
        \vdots & \vdots & \vdots & \ddots & \vdots & \vdots \\
        0 & 0 & 0 & \hdots & I_D & 0 \\
        0 & 0 & 0 & \hdots & -\tilde{A}_T(x_{T-1}) & I_D \\
    \end{pmatrix}.
\end{equation}
For Newton's method $\mathcal{A}_N$, the resulting $\widetilde{\mathbf{J}}_N(\mathbf{x}_{1:T}) = \mathbf{J}(\mathbf{x}_{1:T})$, where $\mathbf{J}(\mathbf{x}_{1:T})$ is defined in \cref{eq:bold_J}.
For Picard iterations, $\widetilde{\mathbf{J}}_P(\mathbf{x}_{1:T})$ takes the form
\begin{equation}\label{eq:A_P}
    \widetilde{\mathbf{J}}_P(\mathbf{x}_{1:T}) = \begin{pmatrix}
        I_D & 0 & 0 &  \hdots & 0 & 0\\
        -I_D & I_D & 0 & \hdots & 0 & 0\\
        0 & -I_D & I_D & \hdots & 0 & 0\\
        \vdots & \vdots & \vdots & \ddots & \vdots & \vdots \\
        0 & 0 & 0 & \hdots & I_D & 0 \\
        0 & 0 & 0 & \hdots & -I_D & I_D \\
    \end{pmatrix}.
\end{equation}
For Jacobi iterations, $\widetilde{\mathbf{J}}_J(\mathbf{x}_{1:T})$ is always the identity matrix $I_{TD}$.

\subsection{Limitations of \Cref{prop:ConvRates}}\label{app:lims}
\Cref{prop:ConvRates} only guarantees a decrease in the error when the iterate $\mathbf{x}_{1:T}^{(i)}$ is already in a basin of decrease $\mathcal{B}_D$ given by
\begin{equation*}
    \mathcal{B}_D := \left\{ \mathbf{x}_{1:T} : \| \mathbf{e}(\mathbf{x}_{1:T}) \|_2 \leq 2 \cdot \frac{1 - \left\| \widetilde{\mathbf{J}}(\mathbf{x}_{1:T})^{-1} \right\|_2 \left\| \widetilde{\mathbf{J}}(\mathbf{x}_{1:T}) - \mathbf{J}(\mathbf{x}_{1:T})  \right\|_2}{L \left\| \widetilde{\mathbf{J}}(\mathbf{x}_{1:T})^{-1} \right\|_2 }   \right\}.
\end{equation*}
However, since we know from Proposition 1 of \cite{gonzalez2024scalable} that all the fixed-point algorithms considered in this paper must eventually converge, we know that the iterates $\mathbf{x}_{1:T}^{(i)}$ must all eventually enter this basin of decrease $\mathcal{B}_D$ if $\mathcal{B}_D \neq \emptyset$. For this reason, \Cref{prop:ConvRates} provides helpful intuition about which fixed-point algorithms are useful for which dynamical systems.

For example, let us define the basin of linear rate $\mathcal{B}_L$ to comprise those $\mathbf{x}_{1:T}$ where $\left\| \widetilde{\mathbf{J}}(\mathbf{x}_{1:T}) - \mathbf{J}(\mathbf{x}_{1:T})  \right\|_2  \| \mathbf{e}^{(i)}_{1:T} \|_2 > \frac{L}{2} (\| \mathbf{e}^{(i)}_{1:T} \|_2^2)$, i.e. the expression linear in $\| \mathbf{e}^{(i)}_{1:T} \|_2$ on right side of \eqref{eq:prop3} dominates the expression quadratic in $\| \mathbf{e}^{(i)}_{1:T} \|_2$. It follows that $\mathcal{B}_L$ is given by
\begin{equation*}
    \mathcal{B}_L := \left\{ \mathbf{x}_{1:T} : \| \mathbf{e}(\mathbf{x}_{1:T}) \|_2 \leq \frac{2 \left\| \widetilde{\mathbf{J}}(\mathbf{x}_{1:T}) - \mathbf{J}(\mathbf{x}_{1:T})  \right\|_2}{L}\right\}.
\end{equation*}
Therefore, when $\mathbf{x}_{1:T}^{(i)} \in \mathcal{B}_D \cap \mathcal{B}_L$, it follows that the norm of the error is guaranteed to decrease by a factor of $2 \left\| \widetilde{\mathbf{J}}(\mathbf{x}_{1:T}^{(i)})^{-1} \right\|_2  \left\| \widetilde{\mathbf{J}}(\mathbf{x}_{1:T}^{(i)}) - \mathbf{J}(\mathbf{x}_{1:T}^{(i)})  \right\|_2$. Moreover, as $\| \mathbf{e}^{(i)}_{1:T} \|_2$ approaches zero, the guaranteed factor of decrease approaches the value given by \cref{eq:lin_rate}.

\subsection{Extended discussion about intuitions from the rate of convergence}

In this appendix, we elaborate on the intuitions about the rate of convergence presented in \Cref{ssc:intuitions}.

\subsubsection{Intuitions from Jacobian approximation error}\label{app:proof_diff}

First, elaborating on the section discussing $\left\| \widetilde{\mathbf{J}}(\mathbf{x}_{1:T}) - \mathbf{J}(\mathbf{x}_{1:T})  \right\|_2$, we provide a proof of \Cref{lem:diff}.

\begin{proof}[Proof of \Cref{lem:diff}]
    Plugging in the functional forms of $\widetilde{\mathbf{J}}(\cdot)$ and $\mathbf{J}(\cdot)$, if we define $E_{t+1} := \tilde{A}_{t+1}(x_t) -A_{t+1}(x_t)$, then
\begin{equation*}
    \widetilde{\mathbf{J}}(\mathbf{x}_{1:T}) - \mathbf{J}(\mathbf{x}_{1:T}) = \begin{pmatrix}
        0 & 0 & 0 &  \hdots & 0 & 0\\
        E_2 & 0 & 0 & \hdots & 0 & 0\\
        0 & E_3 & 0 & \hdots & 0 & 0\\
        \vdots & \vdots & \vdots & \ddots & \vdots & \vdots \\
        0 & 0 & 0 & \hdots & 0 & 0 \\
        0 & 0 & 0 & \hdots & E_T & 0 \\
    \end{pmatrix}.
\end{equation*}
The spectral norm of a matrix $M$ is equal to the square root of the largest eigenvalue of $M^{\top} M$. Defining $M := \widetilde{\mathbf{J}}(\mathbf{x}_{1:T}) - \mathbf{J}(\mathbf{x}_{1:T})$, we see that
\begin{equation*}
    M^{\top} M = \begin{pmatrix}
        0 & 0 & 0 &  \hdots & 0 & 0\\
        0 & E_2^{\top} E_2 & 0 & \hdots & 0 & 0\\
        0 & 0 & E_3^{\top} E_3 & \hdots & 0 & 0\\
        \vdots & \vdots & \vdots & \ddots & \vdots & \vdots \\
        0 & 0 & 0 & \hdots & E_{T-1}^{\top} E_{T-1} & 0 \\
        0 & 0 & 0 & \hdots & 0 & E_T^{\top} E_T \\
    \end{pmatrix}.
\end{equation*}
Since $M^{\top} M$ is a block-diagonal matrix, its eigenvalues are equal to the union of the eigenvalues of each of the blocks $E_t^{\top} E_t$. Thus, it follows that the maximum eigenvalue of $M^{\top} M$ is equal to the maximum of all the eigenvalues of all the matrices $E_t^{\top} E_t$, and so the maximum singular value of $\widetilde{\mathbf{J}}(\mathbf{x}_{1:T}) - \mathbf{J}(\mathbf{x}_{1:T})$ is given by $\max_{1 \leq t \leq T-1} \left\| \tilde{A}_{t+1}(x_t) - A_{t+1}(x_t) \right\|_2$.
\end{proof}

\subsubsection{Intuitions from the norm of the inverse of the approximate Jacobian}\label{app:J_inv}
Next, we elaborate on our bounds for $\left\| \widetilde{\mathbf{J}}(\mathbf{x}_{1:T})^{-1} \right\|_2$.

Recall that $\widetilde{\mathbf{J}}(\mathbf{x}_{1:T})^{-1}$ takes the form \citep{mamba2, gonzalez2025predictability}
\begin{equation*}
    \widetilde{\mathbf{J}}(\mathbf{x}_{1:4})^{-1} = \begin{pmatrix}
    I_D & 0 & 0 & 0 \\
    \tilde{A}_2 & I_D & 0 & 0 \\
    \tilde{A}_3 \tilde{A}_2 & \tilde{A}_3 & I_D & 0 \\
    \tilde{A}_4 \tilde{A}_3 \tilde{A}_2 & \tilde{A}_4 \tilde{A}_3 & \tilde{A}_4 & I_D
    \end{pmatrix},
\end{equation*}
shown above for $T=4$.

Theorem 2 of \citet{gonzalez2025predictability} controls $\left\| \widetilde{\mathbf{J}}(\mathbf{x}_{1:T})^{-1} \right\|_2$ in terms of $\overline{\rho} := \sup_{2 \leq t \leq T, x \in \mathbb{R}^D} \| \tilde{A}_t(x) \|_2$ and $\underline{\rho} := \inf_{2 \leq t \leq T, x \in \mathbb{R}^D} \| \tilde{A}_t(x) \|_2$ as
\begin{equation*}
    \max(1, \underline{\rho}^{T-1}) \leq \| \widetilde{\mathbf{J}}(\mathbf{x}_{1:T})^{-1} \|_2 \leq \frac{1 - \overline{\rho}^T}{1 - \overline{\rho}},
\end{equation*}
for $\overline{\rho} \neq 1$.
We would therefore expect that fixed-point methods that give rise to unstable LDSs with transition matrices having spectral norms much larger than one should have slower rates of convergence. 
Methods that give rise to unstable LDSs suffer from numerical blowup, especially for large $T$. 

Moreover, in the special cases of Jacobi and Picard iterations, we can compute $\left\| \widetilde{\mathbf{J}}(\mathbf{x}_{1:T}^{(i)})^{-1} \right\|_2$ analytically. For Jacobi iterations, $\left\| \widetilde{\mathbf{J}}_J \right\|_2 = 1$. For Picard iterations, the expression for $\left\| \widetilde{\mathbf{J}}_P \right\|_2$ is more complicated, but it scales as $O(T)$ (see \Cref{lem:PicardNorm} in \Cref{app:ProofPicardNorm}).

Because $\| \widetilde{\mathbf{J}}_P^{-1} \|_2 > \| \widetilde{\mathbf{J}}_J^{-1} \|$ for large $T$, the formula for $\gamma$ given by \cref{eq:lin_rate} yields the following expectation:
\begin{quote}
    In settings where the $A_{t+1}$ from Picard vs. Jacobi iterations approximates the true dynamics Jacobian $\nicefrac{\partial f_{t+1}}{\partial x_t}$ equally well, we expect Jacobi iterations to converge more quickly because $\| \widetilde{\mathbf{J}}_J^{-1} \| < \| \widetilde{\mathbf{J}}_P^{-1} \|_2$.
\end{quote}
We test this hypothesis in the next section with a simple simulation designed to show how \Cref{prop:ConvRates} provides helpful intuition about the convergence rates of different fixed-point methods.

\subsection{A simulation using \Cref{prop:ConvRates} to distinguish between Jacobi and Picard iterations}\label{ssc:PicardJacobi}
\begin{figure}
    \centering
     \includegraphics[width=\linewidth,height=0.42\textheight,keepaspectratio]
    {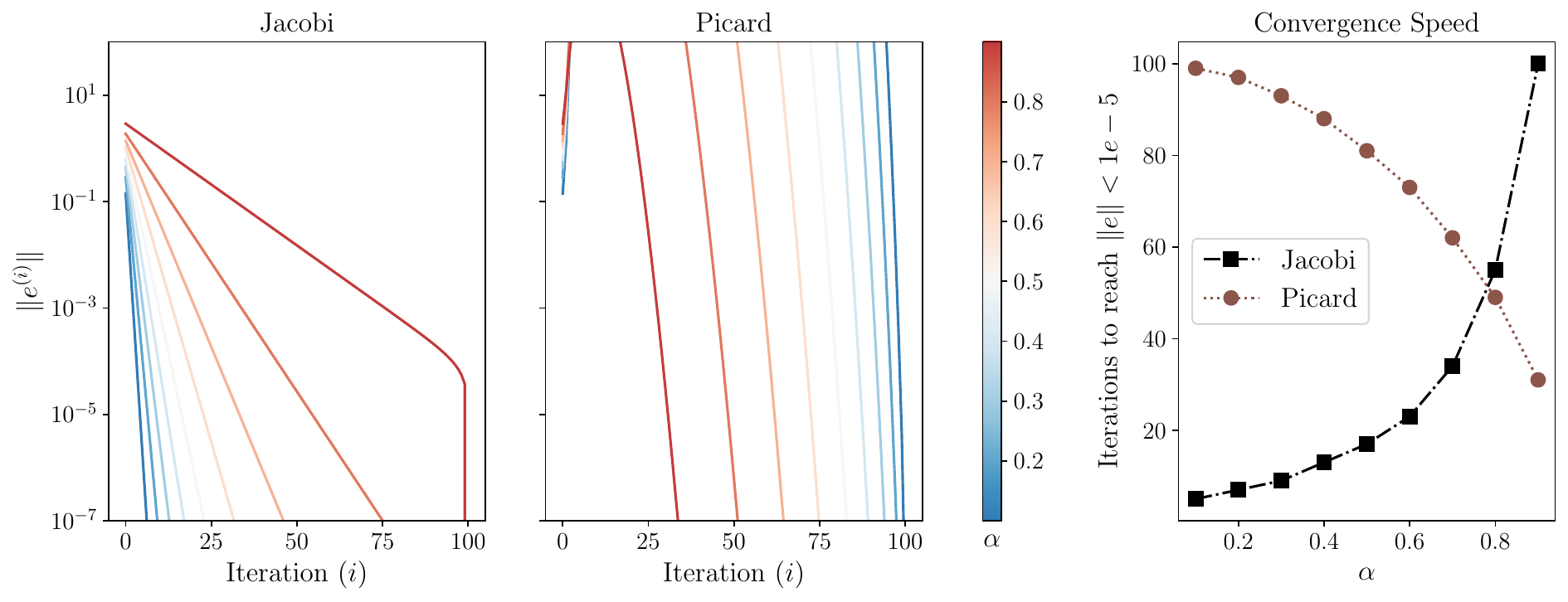}
        \caption{\textbf{Comparing Picard and Jacobi iterations on a diagonal LDS.} For the underlying dynamical system $x_{t+1} = \alpha x_t$ with $T=100$, we plot the norm of the error $\mathbf{e}^{(i)}_{1:T}$ for Jacobi \textbf{(Left)} and Picard iterations \textbf{(Center)}. In the \textbf{(Right)} panel, we also show the number of iterations needed for the norm of the error to go below $\num{1e-5}$.}
    \label{fig:JacobiPicard}
\end{figure}

We demonstrate the helpfulness of the intuitions stemming from \Cref{prop:ConvRates} in a simple simulation.
We consider the LDS $x_{t+1} = \alpha x_t$, for $x_t \in \mathbb{R}^2$.
Because this is an LDS with diagonal dynamics, both the Newton and quasi-Newton iterations considered in this paper converge in one iteration. However, this simulation is useful for comparing Jacobi versus Picard iterations. This comparison is particularly fruitful in light of the formula for $\gamma$ given by \cref{eq:lin_rate} and \Cref{lem:diff} because, in this setting,
\begin{align*}
    \| \widetilde{\mathbf{J}}_J - \mathbf{J} \|_2 & = \alpha \\
    \| \widetilde{\mathbf{J}}_P  - \mathbf{J }\|_2 & = 1 - \alpha.
\end{align*}
However, $\| \widetilde{\mathbf{J}}_J^{-1} \|_2 = 1$, while $\| \widetilde{\mathbf{J}}_P^{-1} \|_2$ scales linearly with $T$.
Therefore, when comparing the number of Jacobi iterations needed to converge when the dynamics are multiplication by $\alpha$ to the number of Picard iterations needed to converge when the dynamics are multiplication by $1 - \alpha$, we expect fewer Jacobi iterations should be needed than Picard iterations, as $\gamma_J < \gamma_P$.

We observe precisely this behavior in \Cref{fig:JacobiPicard}. For $\alpha=0.5$, when $\| \widetilde{\mathbf{J}}_J - \mathbf{J} \|_2 = \| \widetilde{\mathbf{J}}_P  - \mathbf{J }\|_2$ , we see that Jacobi iterations converge in far fewer iterations than Picard iterations. Moreover, when comparing the behavior of Jacobi for simulating $f_{t+1}(x_t) = \alpha x_t$ with Picard for simulating $f_{t+1}(x_t) = (1 - \alpha) x_t$, we observe that Jacobi iterations always converge faster. However, when comparing for the same value of $\alpha$, we see that Picard can be faster than Jacobi when $\alpha$ is closer to one. This behavior makes sense, because in those settings the true Jacobian $\nicefrac{\partial f_{t+1}}{\partial x_t}$ is closer to $I_D$ than to $\mathbf{0}$.

Moreover, we observe that in this setting, the error $\mathbf{e}^{(i)}_{1:T}$ for Jacobi iteration shows a clear linear convergence rate, as predicted by \Cref{prop:ConvRates}. The slope of the norm of the errors of the Jacobi iterates should be $\log_{10}(\alpha)$ by \cref{eq:lin_rate} and \Cref{lem:diff}, and in fact those values are exactly the slopes of the lines in \Cref{fig:JacobiPicard} (Left panel).

\section{Theoretical Details: Additional Proofs}

\subsection{\Cref{lem:PicardNorm}}\label{app:ProofPicardNorm}

\begin{lemma}\label{lem:PicardNorm}
    Let $\widetilde{\mathbf{J}}_P$ be as in \cref{eq:A_P}. Then
    \begin{equation*}
        \| \widetilde{\mathbf{J}}_P^{-1} \|_2 = \frac{1}{2 \sin\left(\frac{\pi}{2 ( 2T + 1)}\right)}
    \end{equation*}
    By the small-angle approximation for sine, $\| \widetilde{\mathbf{J}}_P^{-1} \|_2$ scales as $\mathcal{O}(T)$.
\end{lemma}

\begin{proof}
    Consider
    \begin{equation*}
        K := \widetilde{\mathbf{J}}_P^{-\top} \widetilde{\mathbf{J}}_P^{-1} = \begin{pmatrix}
            I_D & I_D & I_D & \hdots & I_D \\
            I_D & 2 I_D & 2 I_D & \hdots & 2 I_D  \\
            I_D & 2I_D & 3 I_D & \hdots & 3 I_D \\
            \vdots & \vdots & \vdots & \ddots & \vdots \\
            I_D & 2 I_D & 3 I_D & \hdots & T I_D 
        \end{pmatrix}.
    \end{equation*}
    We know that $\lambda_{\max}(K)^{1/2} = \| \widetilde{\mathbf{J}}_P^{-1} \|_2$. Since $K$ is a Kronecker product $M \otimes I_D$, where $M_{i,j} = \min(i,j)$, the spectrum of $K$ is equivalent to the spectrum of $M$ (just with all eigenvalues having multiplicity $D$).
    Therefore, we seek to find the spectrum of $M \in \mathbb{R}^{T \times T}$.

    However, the spectrum of $M$ is known in the literature. 
    For example, Theorem 2.1 of \citet{daFonseca2007} shows that if $T \geq 3$, then the eigenvalues $\{ \lambda_k \}_{k=0}^{T-1}$ of $M$ are given by
    \begin{align*}
        \lambda_k & = \frac{1}{2} \left( 1 - \cos\left( \frac{2k+1}{2T + 1} \pi \right) \right)^{-1} \\
        & = \frac{1}{4} \left( \sin\left( \frac{2k+1}{2 (2T+1)} \pi \right)  \right)^{-2}
    \end{align*}
    where the second equality comes from the half-angle formula. We observe that the largest eigenvalue is therefore $\lambda_0$, and so the result follows after we take a square root.
\end{proof}

\section{Experimental Details and Additional Experiments}\label{app:supp_exp}

We implemented our experiments using the Equinox library \citep{kidger2021equinox} in JAX \citep{jax2018github}. In our experiments that report wall-clock time, we use a stochastic implementation of quasi-Newton iterations \citep{pmcmc} that estimates the diagonal using the Hutchinson estimator \citep{hutchinson1989stochastic}; see Section 3.4 of \citet{pmcmc} for details. 

The stopping criterion we use for deciding when the fixed-point iterations in \cref{eq:fixed_point_iter} have converged is based on the merit function $\mathcal{L}(\mathbf{x}_{1:T})$, which is defined as:
\begin{align*}
    \mathbf{r}(\mathbf{x_{1:T}}) & = [x_1 - f(x_0), x_2 - f(x_1), x_3 - f(x_2), \ldots, x_T - f(x_{T-1})] \\
    \mathcal{L}(\mathbf{x_{1:T}}) & = \frac{1}{2} \| \mathbf{r}(\mathbf{x_{1:T}}) \|_2^2.
\end{align*}
In our experiments we use a tolerance of $\num{5e-4}$, that is we terminate the fixed-point iterations when iterate $i$ satisfies
\begin{equation*}
    \mathcal{L}(\mathbf{x}_{1:T}^{(i)}) \leq \num{5e-4}. 
\end{equation*}

\subsection{Experimental Details for \Cref{ssc:PicardJacobi}}

For each scalar LDS with scalar multiplication $\alpha$, we use a state size of $D=2$, i.e. $x_t \in \mathbb{R}^2$, and a sequence length of $T=100$. We consider 10 random seeds, where the randomness controls the initial state $x_0$. We initialize $\mathbf{x}_{1:T}^{(0)}$ at all zeros. We plot the median number of steps to convergence in \Cref{fig:JacobiPicard}.

\subsection{Experimental Details for \Cref{sec:GrpWordProb}}

We use 10 random seeds, where the randomness controls the sequence of the $S_5$ group elements.
Each seed uses a batch size of 16, i.e. 16 different $S_5$ word problems are evaluated for each run.
For each seed, we time how long it takes for 5 runs of the fixed point solver (sequential, Picard, quasi-Newton, or Newton) to evaluate, and record this mean wall-clock time. 
In \Cref{fig:s5_results}, we then plot the median of these 10 mean wall-clock times. We run on an H100 with 80GB of onboard memory.

\subsection{Experimental Details for \Cref{sec:GruExp}}\label{app:ExtraGru}

The experiment depicted in \Cref{fig:gru_init} closely follows the experiments in Section 4.1 of \citet{deer2024} and Section 6.1 of \citet[Section 6.1]{gonzalez2024scalable}. We use 10 random seeds, where the randomness controls the initialization of the GRUs, the random inputs to the GRUs, and Rademacher variables used in the stochastic implementation of quasi-Newton iterations \citep{pmcmc}.
The GRUs were initialized following the standard initialization
practice in Equinox. 
Each seed uses a batch size of 16, i.e. 16 different GRU trajectories are evaluated for each run.
For each seed, we time how long it takes for 5 runs of the fixed point solver (sequential, Picard, quasi-Newton, or Newton) to evaluate, and record this mean wall-clock time. In \Cref{fig:gru_init}, we then plot the median of these 10 mean wall-clock times. We run on an H100 with 80GB of onboard memory.

To demonstrate the different values of the $\mathrm{diff}(\cdot)$ operator for quasi-Newton, Jacobi, and Picard iterations in this setting, we consider the setting $D=8$ and $T=1000$. In \Cref{fig:diff}, we plot a variety of quantities relevant for the $\gamma$ (cf. \cref{eq:lin_rate}) for ten random seeds. We observe that lower values of $\gamma$ (i.e., faster rates of asymptotic linear convergence) coincide with fewer fixed-point iterations needed in \Cref{fig:gru_init}.
\begin{figure}[ht]
    \centering
    \includegraphics[width=\linewidth]{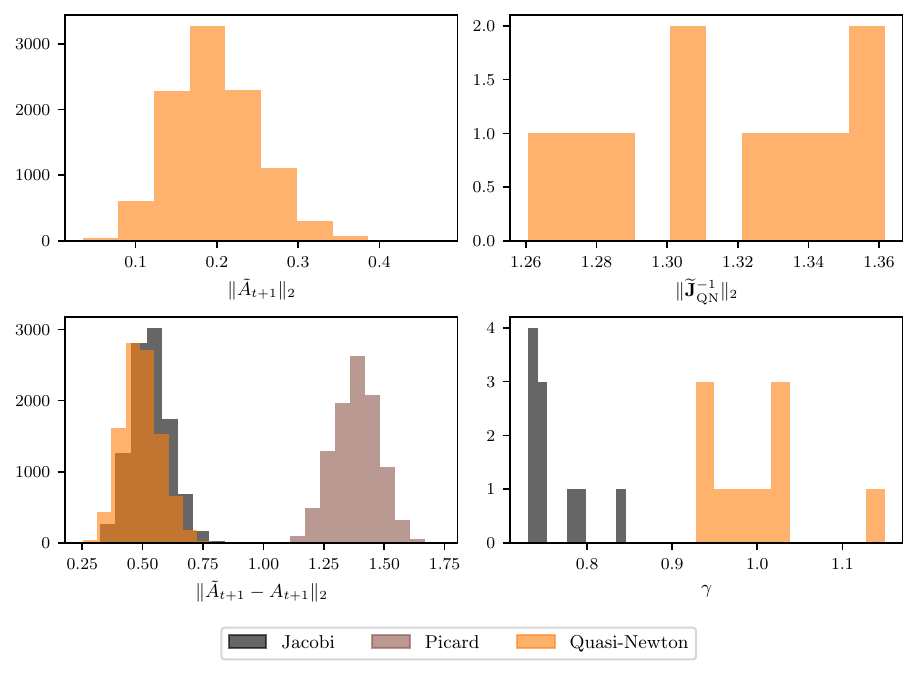}
    \caption{\textbf{Understanding the convergence rates in \Cref{fig:gru_init}.} In the setting of the GRU experiment for $D=8$ and $T=1000$, we plot relevant quantities for understanding the convergence rates of different methods over $10$ random seeds. \textbf{(Top left.)} For all $T=1000$ time steps and over all $10$ random seeds, we plot the spectral norm of the approximate Jacobian for the quasi-Newton iterations we consider in this paper, i.e. $\mathrm{diag}[\nicefrac{\partial f_{t+1}}{\partial x_t}(x_t^{\star})]$. \textbf{(Top right.)} For each of the $10$ random seeds, we plot $\| \widetilde{\mathbf{J}}_{\mathrm{QN}}(\mathbf{x}_{1:T}^{\star})^{-1} \|_2$. We observe that they are always larger than one. \textbf{(Bottom left.)} We plot the difference between approximate Jacobians and true dynamics Jacobians over all time steps and seeds for quasi-Newton, Jacobi, and Picard iterations. We observe that this difference for Picard iterations is always larger than one, and so we would intuitively expect Picard iteration to be very slow for parallelizing GRUs. This behavior is precisely what we see in \Cref{fig:gru_init}. \textbf{(Bottom right.)} Across the $10$ random seeds, we plot the value of $\gamma$ for Jacobi and quasi-Newton iterations (Picard would be $O(T)$ and so is not shown). Because $\| \widetilde{\mathbf{J}}_{\mathrm{J}}(\mathbf{x}_{1:T}^{\star})^{-1} \|_2 = 1$, the 10 $\gamma_J$'s are equivalent to the maximum values from the differences in (bottom left) over the 10 random seeds. However, since (top right) shows that $\| \widetilde{\mathbf{J}}_{\mathrm{QN}}(\mathbf{x}_{1:T}^{\star})^{-1} \|_2 > 1$, we observe that the values of $\gamma_{QN}$ are larger than the values of $\gamma_{\mathrm{J}}$ in (bottom right). In summary, because the values of $\gamma_J$ are smaller than the values of $\gamma_{\mathrm{QN}}$, we would intuitively expect Jacobi to converge in fewer fixed-point iterations, which is exactly what we observe in \Cref{fig:gru_init}.   }
    \label{fig:diff}
\end{figure}
We observe that both $\mathrm{diff}(\mathcal{A}_J)$ and $\mathrm{diff}(\mathcal{A}_{QN})$ are both below one always, which corresponds to their fast rates of convergence demonstrated in \Cref{fig:gru_init}. In contrast, $\mathrm{diff}(\mathcal{A}_P)$ is always greater than one, which corresponds to the slow rates of convergence of Picard iteration in the experiment depicted in \Cref{fig:gru_init}.

\subsection{Experimental Details for \Cref{sec:WellExp}}

\begin{figure}
    \centering
    \includegraphics[width=\linewidth]{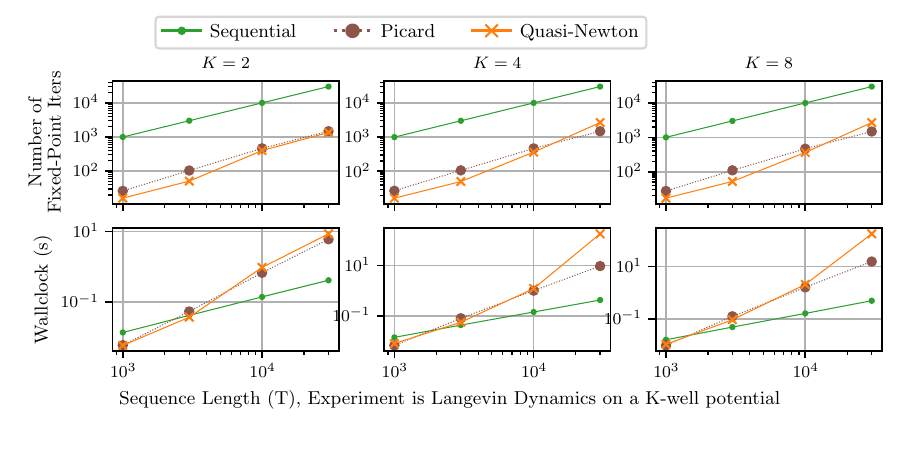}
    \caption{In a similar setting as \Cref{fig:2well}, we instead consider evaluating Langevin dynamics on a potential defined by the negative log probability of a mixture of $K$ anisotropic Gaussians. Here we keep $D=128$ throughout. We observe qualitatively similar behavior to that shown in \Cref{fig:2well}, where both Picard and quasi-Newton iterations enjoy similar convergence and wall-clock speed.}
    \label{fig:kwell}
\end{figure}

The potential $\phi$ used for the experiment depicted in \Cref{fig:2well} is the negative log probability of a mixture of Gaussians. Each Gaussian is $D$-dimensional, and has a random covariance matrix drawn from a Wishart distribution. 
For each fixed-point method, we ran 10 random seeds, where the randomness controls the randomly chosen covariance matrices for the mixture of Gaussians and the random inputs for Langevin dynamics.
Each seed uses a batch size of 16, i.e. 16 different Langevin trajectories are evaluated for each run. We use a step size $\epsilon=\num{1e-5}$ for the discrete Langevin steps.
For each seed, we time how long it takes for 5 runs of the fixed point solver (sequential, Picard, quasi-Newton, or Newton) to evaluate, and record this mean wall-clock time.
In \Cref{fig:2well}, we then plot the median of these 10 mean wall-clock times. We run on an H100 with 80GB of onboard memory.
In order to get convergence for Newton iterations for longer sequence lengths, we had to run with increased precision, using the \texttt{highest} option for the \texttt{jax\_default\_matmul\_precision} flag.
See \Cref{app:implement} for further discussion of the importance of numerical precision for implementation of these LDS-based fixed-point methods.

We also include a small additional experiment in \Cref{fig:kwell}: instead of varying the state size dimension, we instead vary $K$, the number of Gaussians that make up the potential $\phi$ determining the Langevin dynamics. We observe qualitatively similar results to the experiment shown in \Cref{fig:2well}: viz, that Picard and quasi-Newton iterations enjoy similar convergence rates and wall-clock time in settings where the Jacobian is well-approximated by the identity matrix.

Finally, we include an experiment designed to show how the density of the Jacobian favors Newton iterations over Picard iterations. In the experiment depicted in \Cref{fig:controlled}, we again parallelize discretized Langevin diffusion two-well potentials. However, this time we use three different discretization step-sizes $\epsilon$: we use $\num{1e-5}, \num{1e-4}$, and $\num{1e-3}$. We showed that for step size $\epsilon=\num{1e-5}$ (the smallest considered, Jacobian most diagonal) that Newton and Picard both converged in a small number of fixed-point iterations, with Picard therefore running faster overall. In contrast, for $\epsilon=\num{1e-3}$ (the largest considered, Jacobian closer to dense), Newton continued to converge in a small number of iterations whereas Picard required considerably more iterations to converge. In this setting, Newton remained faster than sequential evaluation on wallclock time, whereas Picard did not.

\begin{figure}
    \centering
    \includegraphics[scale=0.52]{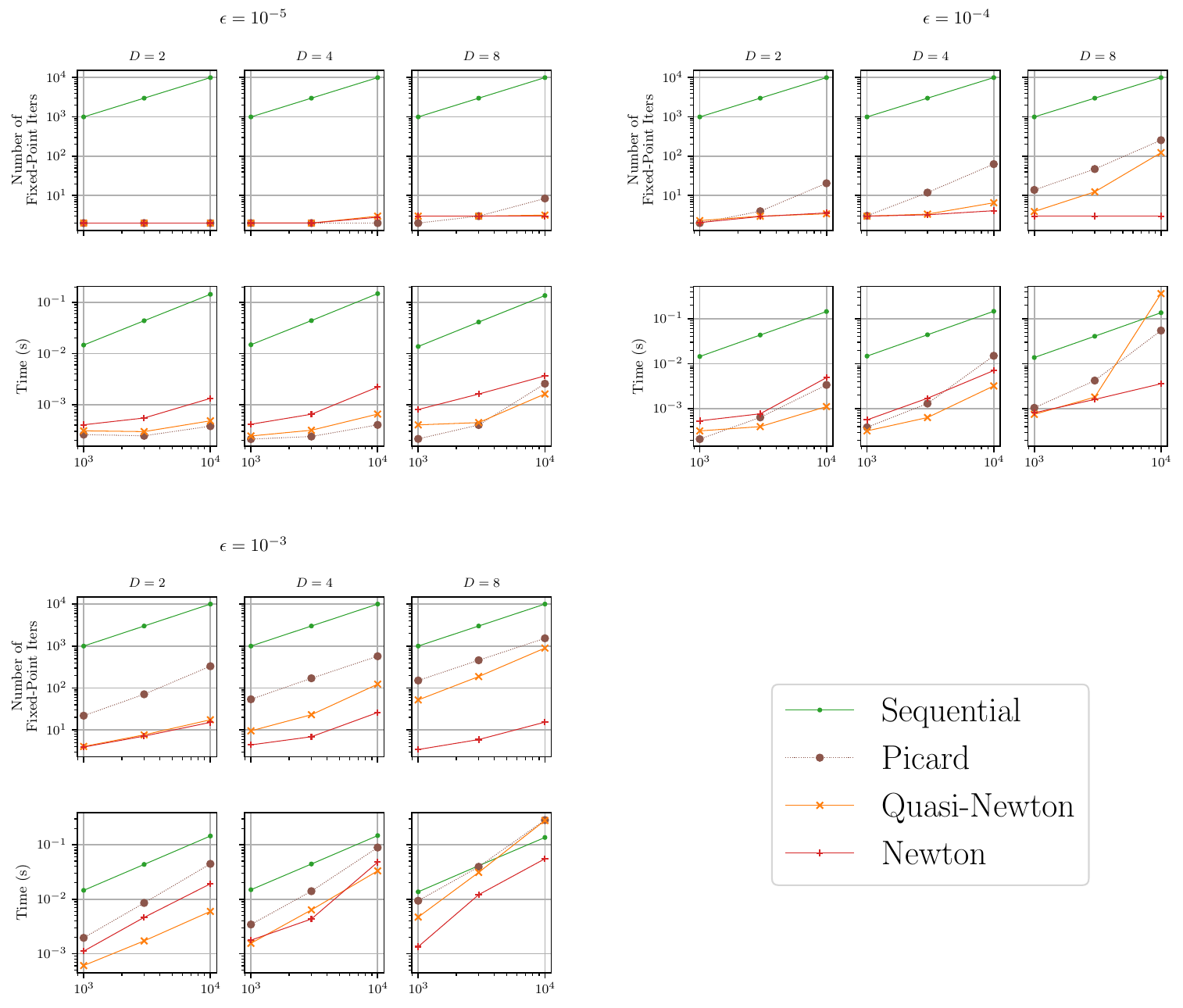}
    \caption{\textbf{Varying the step size $\epsilon$}. In this variant of discretized Langevin diffusion on a two-well potential, we vary the step size $\epsilon$. For the smallest step size $\epsilon=\num{1e-5}$ the Jacobian is most diagonal dominant of these three settings, and Picard iterations are faster on a wallclock time than Newton iterations. For the largest step size $\epsilon=\num{1e-3}$ the Jacobian is least diagonal dominant of these three settings, and Picard iterations are slower on a wallclock time than Newton iterations. We do not show Jacobi iterations because they already were demonstrated to struggle in discretizations of Langevin diffusions in \Cref{fig:2well}.}
    \label{fig:controlled}
\end{figure}

\subsection{Implementation Considerations}\label{app:implement}

In this section, we note that while the presented fixed-point methods are parallelizable, 
their real-world efficiency depends on the compute environment.  Choosing the appropriate method requires balancing convergence speed, computational intensity, and resource availability.

\paragraph{Parallel Associative Scan is Hardware-Sensitive} As we have shown, the fixed-point methods discussed can be cast as LDSs and therefore be parallelized over the sequence length using a parallel associative scan. However, the practical performance of these operations depends strongly on the hardware and low-level implementation details. 

For example, modern GPUs (e.g., NVIDIA A100, H100) are highly optimized for tensor operations and large batch matrix multiplications, which favor methods like Newton and quasi-Newton iterations that perform fewer, more intensive steps. 

The performance gains from using more iterations of lighter-weight updates (e.g., Picard iterations) are hardware-dependent.
For example, \citet{sarnthein2025blog} showed that writing custom kernels for linear recurrences as tensor operations can yield near-optimal memory-bound performance.

\paragraph{Memory Usage and Tradeoffs}
The memory requirements of different fixed-point iterations vary substantially. In particular:
\begin{itemize}
\item \textbf{Newton iterations} require storing and manipulating full Jacobians, which scales as $\mathcal{O}(D^2 T)$ in space. This can become prohibitive for long sequences or high-dimensional hidden states.
\item \textbf{Quasi-Newton iterations} reduce memory cost by using diagonal Jacobians, bringing the complexity down to $\mathcal{O}(D T)$. This is often a sweet spot for balancing memory usage and convergence rate.
\item \textbf{Picard and Jacobi iterations} are the most memory-efficient, requiring only the storage of current and previous state estimates ($\mathcal{O}(D T)$), and no Jacobian-related storage.
\end{itemize}

In practice, when parallelizing over long sequences ($T \gg D$), the memory cost is often dominated by the size of intermediate state representations and the need to unroll computations over multiple fixed-point iterations. Chunking (dividing the sequence into smaller windows) and truncation (limiting the number of fixed-point iterations) are useful strategies to reduce memory usage in these settings \citep{dao2022flashattention, shih2023parallel, selvam2024selfrefining, geiping2025scaling, pmcmc}

\paragraph{Numerical Stability}
For all fixed-point methods, numerical stability is a concern \citep{yaghoobi2025parallel}. In particular, LDS matrices with spectral norm close to or greater than one can cause numerical instabilities in the parallel scan operation \citep{gonzalez2024scalable, gonzalez2025predictability}. This is especially critical in high-precision tasks or long sequences, and practitioners should monitor for numerical divergence or the accumulation of floating-point error. 

\section{Extended related work}\label{app:ext_r_work}

\subsection{Discussion of parallel chord methods}\label{app:parallel_chord}

\citet{ortega2000iterative} discuss at length iterative methods for solving \revision{arbitrary systems of nonlinear equations} $\mathbf{F}(\mathbf{x}) = 0$ using iterations of the form
\begin{equation}\label{eq:parr_chord}
    \mathbf{x}^{(i+1)} = \mathbf{x}^{(i)} - \widetilde{\mathbf{J}}(\mathbf{x}^{(i)})^{-1} \mathbf{F}(\mathbf{x}^{(i)})
\end{equation}
for some matrix $\widetilde{\mathbf{J}}(\mathbf{x}^{(i)})$.
In general, $\widetilde{\mathbf{J}}$ can be a function of the current iterate $\mathbf{x}^{(i)}$ or a fixed and constant matrix.
Newton's method corresponds to
\begin{equation*}
    \widetilde{\mathbf{J}}(\mathbf{x}^{(i)}) = \dfrac{\partial \mathbf{F}}{\partial \mathbf{x}}(\mathbf{x}^{(i)}).
\end{equation*}
When $\widetilde{\mathbf{J}}$ is fixed and constant, \cite{ortega2000iterative} describe the resulting family of fixed-point iterations as \emph{parallel-chord methods}. However, we will use this term for \emph{all} iterative methods with updates of the form in \cref{eq:parr_chord}, which includes both Newton and Picard iterations.

The term ``parallel'' in this context does not have to do with applying a parallel scan over the sequence length (which we discuss at length in this paper). Instead, ``parallel'' in ``parallel-chord methods'' refers to the way in which Newton's method finds the zero of a function by making a guess for the zero, and then forming a chord that is parallel to the function at the current guess (see \Cref{fig:pchord}). In one-dimension, the linearization is a line (a chord), while in higher-dimensions the linearization is in general a hyperplane. In Newton's method, the chord/hyperplane is tangent to the function at the current guess, while for other parallel-chord methods the approximate linearization is in general not tangent.

\begin{figure}[ht]
    \centering
    \includegraphics[width=0.9\linewidth]{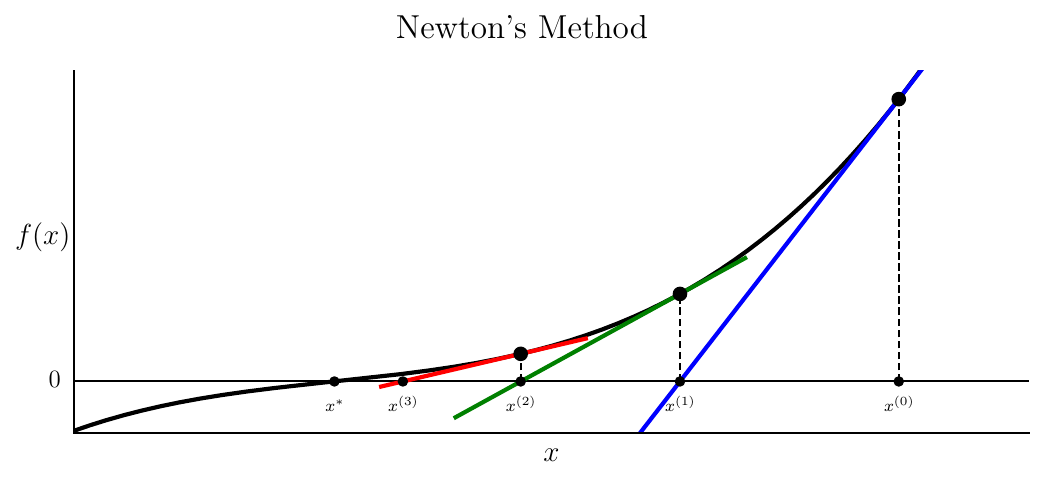}
    \caption{\textbf{The term ``parallel'' in parallel-chord methods.} Here we illustrate 3 iterations of Newton's method for root-finding on the one-dimensional cubic function $f(x) = (x - 0.4)^3 + 0.45 (x - 0.4)$. We observe that each iteration of Newton's method involves forming a ``parallel chord'' to the function (shown in color).}
    \label{fig:pchord}
\end{figure}

The equation $\mathbf{F}(\mathbf{x}) = \mathbf{0}$ is a fully general way to represent a system of nonlinear equations. However, in this paper, we focus on parallelizing Markovian state space models. Thus, we consider the special setting where
\begin{equation}\label{eq:resid_F}
    \mathbf{F}(\mathbf{x}) = \mathbf{r}(\mathbf{x}_{1:T}) := 
    [x_1 - f(x_0), x_2 - f(x_1), x_3 - f(x_2), \ldots, x_T - f(x_{T-1})],
\end{equation}
where $f$ is the nonlinear transition function, as defined in \cref{eq:nonlin_recur}, and $\mathbf{r}$ is the residual function defined in \cref{eq:resid}.
Thus, in the context of parallelizing sequences, it follows that
\begin{align}\label{eq:dF_dx}
    \dfrac{\partial \mathbf{F}}{\partial \mathbf{x}}(\mathbf{x})  = \begin{pmatrix}
        I_D & 0 & 0 &  \hdots & 0 & 0\\
        -\frac{\partial f_2}{\partial x} (x_1) & I_D & 0 & \hdots & 0 & 0\\
        0 & -\frac{\partial f_3}{\partial x} (x_2) & I_D & \hdots & 0 & 0\\
        \vdots & \vdots & \vdots & \ddots & \vdots & \vdots \\
        0 & 0 & 0 & \hdots & I_D & 0 \\
        0 & 0 & 0 & \hdots & -\frac{\partial f_T}{\partial x} (x_{T-1}) & I_D \\
    \end{pmatrix}.
\end{align}
When we plug this form of the Jacobian into $\widetilde{\mathbf{J}}$ in \cref{eq:parr_chord} and simplify, we obtain the linear dynamical system in \cref{eq:deer}, i.e. Newton's method.

In their treatment of Picard iterations, \citet{ortega2000iterative} consider a more general formulation than that presented in \citet{shih2023parallel} or in \cref{eq:picard_shih}. Instead, similar to the definition presented in Appendix C.2.3 of \citet{gu2021combining}, \citet{ortega2000iterative} define Picard iterations in the setting where we have removed a linear component of $\mathbf{F}$, namely we have written
\begin{equation}\label{eq:picard_refactor}
    \mathbf{F} (\mathbf{x}) =: \widetilde{\mathbf{J}} \mathbf{x} - \mathbf{G}(\mathbf{x}),
\end{equation}
for some constant, nonsingular matrix $\widetilde{\mathbf{J}}$ and nonlinear function $\mathbf{G}(\cdot)$. Note that such redefinition of $\mathbf{F}(\cdot)$ in terms of $\widetilde{\mathbf{J}}$ and $\mathbf{G}(\cdot)$ is always possible and not uniquely determined. After making such a redefinition, \citet{ortega2000iterative} define a Picard iteration as an update of the form
\begin{equation}\label{eq:picard_or}
    \mathbf{x}^{(i+1)} = \widetilde{\mathbf{J}}^{-1} \mathbf{G}(\mathbf{x}^{(i)}).
\end{equation}
However, by multiplying both sides of \cref{eq:picard_refactor} by $\widetilde{\mathbf{J}}^{-1}$, it follows that 
\begin{equation*}
    \widetilde{\mathbf{J}}^{-1} \mathbf{G}(\mathbf{x}^{(i)}) = \mathbf{x}^{(i)} - \widetilde{\mathbf{J}}^{-1} \mathbf{F}(\mathbf{x}^{(i)}),
\end{equation*}
showing that the Picard iterations as defined in \cref{eq:picard_or} fit into the parallel-chord framework set out in \cref{eq:parr_chord}.
Note that Picard iterations as defined by \citet{shih2023parallel} or in \cref{eq:picard_shih} of this paper also fit into the framework of \cref{eq:picard_refactor}: in the context of evaluating discretized ODEs, the residual defined in \cref{eq:resid_F} becomes
\begin{equation*}
    F_{t+1}(\mathbf{x}) = x_{t+1} - x_t - \epsilon g_t(x_t).
\end{equation*}
Thus, in the context of \cref{eq:picard_refactor}, we have that the resulting $G_t(\mathbf{x}) = \epsilon g_{t-1}(x_{t-1})$, while the resulting $\widetilde{\mathbf{J}}$ operator is given by \cref{eq:A_P}.

When we plug this $\widetilde{\mathbf{J}}$ into \cref{eq:parr_chord} and simplify, we obtain the linear dynamical system in the ``Picard'' row of \Cref{tab:fxd_pt_sum}.
In general, the fixed-point methods of the common form given by \cref{eq:common_form} all give rise to $\widetilde{\mathbf{J}} \in \mathbb{R}^{TD \times TD}$ matrices of the form shown in \cref{eq:bold_A}.

Thus, \citet{ortega2000iterative} unites Newton and Picard iterations for the general root finding problem $F(\mathbf{x}) = 0$ under the umbrella of parallel-chord methods, which are iterative updates of the form of \cref{eq:parr_chord}. The framework we provide in \Cref{tab:fxd_pt_sum} can be understood as a specialization of parallel-chord methods for the particular problem of sequential evaluation discussed in \cref{eq:nonlin_recur}. Nonetheless, we focus on how in the specific problem of sequential evaluation, which is of great interest in many areas of machine learning, a wide variety of fixed-point methods become iterative application of LDSs, allowing them to be parallelized over the sequence length with an associative scan. This important perspective about parallelizability, which is of great interest in machine learning, is not discussed in \citet{ortega2000iterative} because they are considering a more general problem. 

\citet{ortega2000iterative} also discuss in their Chapters 7 and 10 how the closeness of the ``parallel chord'' (in general and in higher dimensions, the ``approximating hyperplane'') to the true linearization of the function (Newton's method) affects the number of iterations needed for the parallel-chord method to converge.
\revision{This analysis is directly analogous to our study of the effect of $\left \| \widetilde{\mathbf{J}}(\mathbf{x}_{1:T}) - \mathbf{J}(\mathbf{x}_{1:T}) \right \|_2$ on the rate of convergence of fixed-point methods, see \Cref{lem:diff}}
In particular, in Chapter 10 of \cite{ortega2000iterative}, they consider the rates of convergence of fixed-point methods with updates taking the form of
\begin{equation}\label{eq:one_step_stationary}
    \mathbf{x}^{(i+1)} = \mathbf{U}(\mathbf{x}^{(i)}),
\end{equation}
for some function $\mathbf{U}(\cdot)$. \citet{ortega2000iterative} use the name \emph{one-step stationary methods} such fixed-point methods with updates with the form \cref{eq:one_step_stationary}.

For parallel-chord methods of the form given in \cref{eq:parr_chord}, it follows that
\begin{equation}\label{eq:u_pchord}
    \mathbf{U}(\mathbf{x}^{(i)}) = \mathbf{x}^{(i)} - \widetilde{\mathbf{J}}(\mathbf{x}^{(i)})^{-1} \mathbf{F}(\mathbf{x}^{(i)}).
\end{equation}

In particular, in their Chapters 7 and 10, \cite{ortega2000iterative} introduce and study $\sigma(\mathbf{U}, \mathbf{F}, \mathbf{x}^{\star})$, which determines the rate of convergence of iterative methods with updates of the form given by \cref{eq:one_step_stationary} to the solution $\mathbf{x}^{\star}$ of $\mathbf{F}(\mathbf{x})=\mathbf{0}$. They define $\sigma$ as
\begin{equation}\label{eq:sigma_or}
    \sigma(\mathbf{U}, \mathbf{F}, \mathbf{x}^{\star}) := \rho\left( \dfrac{\partial \mathbf{U}}{\partial \mathbf{x}}(\mathbf{x}^{\star}) \right),
\end{equation}
where $\rho(M)$ denotes the spectral radius of a matrix $M$.

In the context of parallel-chord methods where $\mathbf{U}(\cdot)$ is given by \cref{eq:u_pchord}, it follows that
\begin{equation*}
    \dfrac{\partial \mathbf{U}}{\partial \mathbf{x}}(\mathbf{x}^{\star}) = \mathbf{I} - \widetilde{\mathbf{J}}(\mathbf{x}^{\star})^{-1} \dfrac{\partial \mathbf{F}}{\partial \mathbf{x}}(\mathbf{x}^{\star}),
\end{equation*}
because $\mathbf{F}(\mathbf{x}^{\star}) = 0$.
Thus it follows that if  $\widetilde{\mathbf{J}} = \nicefrac{\partial \mathbf{F}}{\partial \mathbf{x}}(\mathbf{x}^{\star})$, then $\sigma=0$. Thus, lower values of $\sigma$ indicate that $\widetilde{\mathbf{J}}$ is a good approximation of the Jacobian matrix $\nicefrac{\partial \mathbf{F}}{\partial \mathbf{x}}$ evaluated at the zero $\mathbf{x}^{\star}$ of $\mathbf{F}$, while higher values of $\sigma$ indicate that $\widetilde{\mathbf{J}}$ is a poor approximation for $\nicefrac{\partial \mathbf{F}}{\partial \mathbf{x}}$. \cite{ortega2000iterative} then use $\sigma$ in their Chapter 10 (in particular, their Theorem 10.1.4) to prove linear rates of convergence\footnote{where the rate is given by $\sigma$} for one-step stationary methods within a neighborhood of the solution $\mathbf{x}^{\star}$.

Thus, a takeaway from \cite{ortega2000iterative} (as paraphrased from \citet{Gasilov1981ParallelChord}) is that the closer $\widetilde{\mathbf{J}}$ is to $\nicefrac{\partial \mathbf{F}}{\partial \mathbf{x}}$, the fewer iterations are needed for convergence to $\mathbf{x}^{\star}$. This takeaway is extremely similar to our guidance, though we specialize to the particular system of equations given by \cref{eq:high_d_root_finding} that results from the goal of rolling out the Markov process given by \cref{eq:nonlin_recur}.

However, in the setting we consider in this paper---using fixed-point iterations of the form \cref{eq:common_form} to solve nonlinear equations of the form \cref{eq:high_d_root_finding}---Theorem 10.1.4 of \citet{ortega2000iterative} is actually \emph{trivial}. By ``trivial,'' we mean that it does not distinguish between the convergence rates of any of the fixed-point iterations we focus on in this paper.

To make this point more precisely, we review\footnote{We follow the presentation of Chapter 9 of \citet{ortega2000iterative}, in particular Definition 9.2.1.} the notion of \emph{root-convergence}, more commonly known as \emph{$R$-convergence}.
\begin{definition}[$R$-convergence]\label{def:r_convergence}
    Let $\mathcal{A}$ be a fixed-point operator with fixed-point $\mathbf{x}^{\star}$.
    Let $C(\mathcal{A}, \mathbf{x}^{\star})$ be the set of all sequences generated by $\mathcal{A}$ which converge to $\mathbf{x}^{\star}$.
    Then the $R_1$-factors of $\mathcal{A}$ at $\mathbf{x}^{\star}$ are given by
    \begin{equation}\label{eq:r_convergence}
    R_1(\mathcal{A}, \mathbf{x}^{\star})
    := \sup \left\{
      \limsup_{i\to\infty}\|\mathbf{x}^{(i)}-\mathbf{x}^{\star}\|^{1/i}
      \,\middle|\,
      \{\mathbf{x}^{(i)}\}_{i\ge 0}\in C(\mathcal{A},\mathbf{x}^{\star})
    \right\}.
    \end{equation}
\end{definition}
Intuitively, $R_1(\mathcal{A}, \mathbf{x}^{\star})$ gives the rate of linear convergence of a fixed-point operator $\mathcal{A}$ to its fixed-point $\mathbf{x}^{\star}$.
Theorem 10.1.4 of \citet{ortega2000iterative} implies that if $\mathcal{A}$ is a one-step stationary method with update given by $\mathbf{U}(\cdot)$, then $R_1(\mathcal{A}, \mathbf{x}^{\star}) = \sigma(\mathbf{U}, \mathbf{F}, \mathbf{x}^{\star})$. Therefore, if $\sigma > 0$, then $\sigma$ is the rate of $R$-linear convergence of $\mathcal{A}$ to $\mathbf{x}^{\star}$, while if $\sigma=0$, we say that $\mathcal{A}$ converges \emph{$R$-superlinearly}.
However, it is important to note that these definitions are \emph{asymptotic} in nature.

The fixed-point iterations considered in this paper, i.e. following the common form \cref{eq:common_form}, all have $\sigma=0$, and therefore can be said to converge $R$-superlinearly.
\begin{proposition}\label{prop:or_trivial}
    Let $\mathbf{F}(\mathbf{x}) = \mathbf{0}$ be a nonlinear equation of the form \cref{eq:high_d_root_finding} with solution $\mathbf{x}^{\star}$. Let $\mathcal{A}$ be a parallel-chord method with fixed-point $\mathbf{x}^{\star}$. Then
    \begin{equation*}
        \sigma\left(\mathbf{U}, \mathbf{F}, \mathbf{x}^{\star}\right) = 0.
    \end{equation*}
\end{proposition}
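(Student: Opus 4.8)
The plan is to exploit the common block-triangular structure shared by both the matrix $T$ and the Jacobian $\frac{\partial F}{\partial \mathbf{x}}(\mathbf{x}^*)$, and to show that their product differs from the identity by a nilpotent matrix. First I would record the structure of the two matrices. For the residual $F$ of the form \cref{eq:resid_F}, the Jacobian displayed in \cref{eq:dF_dx} is block lower bidiagonal, with identity blocks $I_D$ on the main diagonal and the negated dynamics Jacobians $-\frac{\partial f_{t+1}}{\partial x}(x_t^*)$ on the subdiagonal. Likewise, by \cref{eq:T_and_A}, any parallel-chord method of the common form \cref{eq:common_form} has a matrix $T$ that is block lower bidiagonal, with $I_D$ on the main diagonal and $-A_{t+1}$ on the subdiagonal. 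Thus both matrices are \emph{block unipotent lower triangular}, i.e. block lower triangular with identity blocks on the diagonal. In particular $T$ is invertible (its determinant is $1$), so $\sigma$ is well defined.

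Next I would use the fact that block unipotent lower-triangular matrices form a group under multiplication. Writing $T = I - N$ with $N$ strictly block lower triangular, the inverse $T^{-1} = \sum_{k=0}^{T-1} N^k$ is again block lower triangular with $I_D$ on the diagonal, since $N$ is nilpotent. Multiplying two such matrices preserves the structure, so the product $T^{-1}\frac{\partial F}{\partial \mathbf{x}}(\mathbf{x}^*)$ is once more block lower triangular with identity blocks on the diagonal.

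The key step is then the subtraction from the identity, which annihilates the diagonal: $I - T^{-1}\frac{\partial F}{\partial \mathbf{x}}(\mathbf{x}^*)$ is strictly block lower triangular, and is therefore nilpotent. A nilpotent matrix has all eigenvalues equal to zero, so its spectral radius vanishes. By the definition of $\sigma$ in \cref{eq:sigma_or}, this yields $\sigma(\mathcal{A}, F, \mathbf{x}^*) = \rho\left(I - T^{-1}\frac{\partial F}{\partial \mathbf{x}}(\mathbf{x}^*)\right) = 0$.

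I do not expect a serious obstacle here; the only point requiring care is the clean observation that subtracting the identity turns the unipotent product into a \emph{strictly} triangular, hence nilpotent, matrix. The supporting facts---that the inverse of a unipotent triangular matrix is unipotent triangular, that the product of two such matrices is unipotent triangular, and that the spectral radius of a nilpotent matrix is zero---are all standard linear algebra and need only be invoked rather than proved.
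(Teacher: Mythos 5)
Your proof is correct and follows essentially the same route as the paper's: both arguments observe that $T$ and $\nicefrac{\partial F}{\partial \mathbf{x}}(\mathbf{x}^*)$ are block lower triangular with identity blocks on the main block-diagonal, conclude the same for $T^{-1}$ and for the product $T^{-1}\,\nicefrac{\partial F}{\partial \mathbf{x}}(\mathbf{x}^*)$, and then note that subtracting this product from the identity leaves a strictly block lower-triangular (hence nilpotent) matrix with spectral radius zero. The only difference is cosmetic: you justify the unipotence of $T^{-1}$ explicitly via the Neumann series $T^{-1}=\sum_{k} N^{k}$, a step the paper simply asserts.
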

\begin{proof}
    Both $\nicefrac{\partial \mathbf{F}}{\partial \mathbf{x}}(\mathbf{x}^{\star})$ and $\widetilde{\mathbf{J}}(\mathbf{x}^{\star})$ are lower-triangular matrices with all $D \times D$ identity matrices on their main block-diagonal. In particular, $\widetilde{\mathbf{J}}^{-1}$ is also a lower-triangular matrix with all $D \times D$ identity matrices on its main block-diagonal. Consequently, the product $\widetilde{\mathbf{J}}^{-1} \dfrac{\partial \mathbf{F}}{\partial \mathbf{x}}$ is also a lower-triangular matrix with all $D \times D$ identity matrices on its main block-diagonal. As a result, $\mathbf{I} - \widetilde{\mathbf{J}}^{-1} \dfrac{\partial \mathbf{F}}{\partial \mathbf{x}}$ is a lower-triangular matrix with all zeros on its main block-diagonal, and so has all its eigenvalues equal to $0$. Consequently, its spectral radius is equal to zero.
\end{proof}
It may seem counterintuitive that even Jacobi iterations technically enjoy $R$-superlinear convergence in the context of parallelizing Markov processes. However, this seemingly strange result stems from the asymptotic nature of \Cref{def:r_convergence} of $R$-convergence, and the fact that Proposition 1 of \citet{gonzalez2024scalable} guarantees that all fixed-point iterations of the form given by \cref{eq:common_form} will converge to $\mathbf{x}^{\star}$ in a finite number of iterations ($T$, to be exact). Therefore, for any LDS fixed-point scheme, we always have $\lim_{i \to \infty} \| \mathbf{x}^{(i)} - \mathbf{x}^{\star} \|=0$.

However, in both Proposition 4 of \citet{lu2025parasolver} and \Cref{prop:ConvRates} of this paper, we effectively get around this difficulty by considering the \emph{spectral norm} instead of the \emph{spectral radius}.
The spectral norm always bounds the spectral radius, and so by focusing on spectral radius, \citet{ortega2000iterative} could get tighter bounds (faster rates of convergence). However, in our setting the spectral radius cannot distinguish between any of the fixed-point methods, and so we instead use the looser bound provided by the spectral norm, which can distinguish between the different fixed-point methods.
Note that the core entities are effectively the same, as $\gamma$ defined in \cref{eq:lin_rate} is equal to $\| \nicefrac{\partial \mathbf{U}}{\partial \mathbf{x}}(\mathbf{x}^{\star}) \|_2$.

Finally, again, because all of our fixed-point methods converge in at most $T$ iterations, asymptotic notions of linear convergence are not suitable to fully capture the behavior of these fixed point methods. 
For this reason, we use empirical case studies in \Cref{sec:tasks} to show the efficacy of the intuition, inspired by \Cref{prop:ConvRates}, that the closeness of $\tilde{A}_t$ to $\nicefrac{\partial f_t}{\partial x}$ impacts the number of iterations needed for $\mathcal{A}$ to converge. This empirical approach also highlights how the increased computational cost of higher-order fixed-point methods affects wall-clock time on GPUs.

\subsection{Trust-region fixed-point methods}\label{app:other}

Many other fixed-point techniques can fit into the general framework we propose in \Cref{tab:fxd_pt_sum} and \cref{eq:common_form} (see the general algorithm in \Cref{alg:fxd_pt}). We focus on Newton, quasi-Newton, Picard, and Jacobi in the main text because of their prominence and canonicity. However, other fixed-point iterations also fit into this framework.

For example, \citet{gonzalez2024scalable} introduces the scale-ELK algorithm which for some $k \in [0,1]$ sets the transition matrix $\tilde{A}_{t}$ to be
\begin{equation*}
    \tilde{A}_t = (1 - k) \dfrac{\partial f_{t+1}}{\partial x}.
\end{equation*}
Scale-ELK can also be applied to the diagonal approximation for quasi-Newton methods. However, scale-ELK introduces an additional hyperparameter $k$. 

Therefore, we propose \emph{clip-ELK}, which is a hyperparameter-free approach to achieve the same goal of a stable LDS. Clip-ELK applies to the diagonal approximation only, and simply clips each element of $\tilde{A}_t$ (which in this setting is a diagonal matrix) to be between $[-1,1]$. Clip-ELK is immediately also a part of the framework set out in Table 1 of fixed-point methods that parallelize the evaluation of sequences via iterative application of LDS; moreover, it too must also converge globally by Proposition 1 of \citet{gonzalez2024scalable}. An interesting and important direction for future work includes both developing and interpreting more fixed-point methods in the context of this framework that we propose.
\end{document}